\newcommand{\Lagr}{\mathop{\mathcal{L}}}
\newtheorem{lem}{Lemma}
\newtheorem{thm}{Theorem}
\newtheorem{prop}{Proposition}
\newcommand{\N}{\mathbb{N}}
\newcommand{\R}{\mathbb{R}}
\newcommand{\ps}[2]{\left\langle #1 , #2 \right\rangle}
\newcommand{\norm}[1]{\| #1 \|}
\renewcommand{\bar}[1]{\overline{#1}}
\newcommand{\bx}[2]{x^{(#1)}_{#2}}
\newcommand{\bv}[1]{v_{#1}}
\newcommand{\bu}[1]{u_{#1}}
\newcommand{\bw}[1]{w_{#1}}
\newcommand{\andd}{\qquad\text{and}\qquad}
\newcommand{\grad}{\nabla}
\newcommand{\diag}{\operatorname{diag}}
\renewcommand{\t}{\tilde}
\newcommand{\cored}[1]{{\color{red}#1}}
\newcommand{\ones}{\mathbf{1}}
\newcommand{\sign}{\operatorname{sign}}
\newcommand{\C}{\Psi}
\newcommand{\inner}[1]{\left\langle#1\right\rangle}
\newif\ifpaper
\title{Globally Optimal Training of Generalized Polynomial Neural Networks with 
Nonlinear Spectral Methods}
\author{
  A. Gautier, Q. Nguyen and M. Hein \\
  Department of Mathematics and Computer Science\\
  Saarland Informatics Campus, Saarland University, Germany\\
}
\begin{document}

\maketitle

\begin{abstract}
The optimization problem behind neural networks is highly non-convex. Training with stochastic gradient
descent and variants requires careful parameter tuning and provides no guarantee to achieve the global
optimum. In contrast we show under quite weak assumptions on the data that a particular class of feedforward 
neural networks can be trained globally optimal with a linear convergence rate with our nonlinear spectral method. Up to our knowledge this is
the first practically feasible method which achieves such a guarantee. While the method can in principle be
applied to deep networks, we restrict ourselves for simplicity in this paper to one and two hidden layer networks.
Our experiments confirm that these models are rich enough to achieve good performance on a series
of real-world datasets.
\end{abstract}

\section{Introduction}

Deep learning \cite{CunBenHin2015,Schmidhuber201585} is currently the state of the art machine learning technique in many application areas such as
computer vision or natural language processing. While the theoretical foundations of neural networks have been explored in depth 
see e.g. \cite{BarAnt1998}, the understanding of the success of training deep neural networks is a currently very active research
area \cite{ChoEtAl2015, DanFroSin2016, HarRecSin2015}. On the other hand the parameter search for stochastic gradient descent
and variants such as Adagrad and Adam can be quite tedious and there is no guarantee that one converges to the global
optimum. In particular, the problem is even for a single hidden layer in general NP hard, see \cite{Sim2002} and references therein.
This implies that to achieve global optimality efficiently one has to impose certain conditions on the problem.

A recent line of research has directly tackled the optimization problem of neural networks and provided either certain guarantees \cite{AroEtAl2014,LivShaSha2014}
in terms of the global optimum or proved directly  convergence to the global optimum \cite{HaeVid2015, JanSedAna2015}. The latter two papers
are up to our knowledge the first results which provide a globally optimal algorithm for training neural networks. While providing a lot of interesting insights on the
relationship of structured matrix factorization and training of neural networks,  Haeffele and Vidal admit themselves in their paper \cite{HaeVid2015} 
that their results are ``challenging to apply in practice''.  In the work of Janzamin et al. \cite{JanSedAna2015} they use a tensor approach and
propose a globally optimal algorithm for a feedforward neural network with one hidden layer and squared loss. However, their approach requires the computation
of the score function tensor which uses the density of the data-generating measure. However, the data generating measure  is unknown and also difficult to estimate for high-dimensional
feature spaces. Moreover, one has to check certain non-degeneracy conditions of the tensor decomposition to get the global optimality guarantee.

In contrast our nonlinear spectral method  just requires that the data is nonnegative which is true for all sorts of count data such as images, word frequencies etc. The condition which guarantees global optimality just depends on the parameters of 
the architecture of the network and boils down to the computation of the spectral radius of a small nonnegative matrix. The condition can be checked without running the algorithm. Moreover, the nonlinear spectral method has a linear convergence rate and thus the globally optimal training of the network is very fast. The two main changes compared to the standard setting are that we require nonnegativity on the weights of the network and we have to minimize a modified objective function which is the sum of loss and the negative total sum of the outputs.
While this model is non-standard, we show in some first experimental results that the resulting classifier is still expressive enough to create complex decision boundaries. As well, we achieve competitive performance on some UCI datasets. As the nonlinear spectral method requires some non-standard techniques, we use the main part of the paper to develop the key steps necessary for the proof. However, some proofs of the intermediate results are moved to the supplementary material.

\section{Main result}
\begin{wrapfigure}{L}{0.3\textwidth}  
    \vspace{-20pt}
    \centering
  \includegraphics[height=2.9cm, width=1\linewidth]{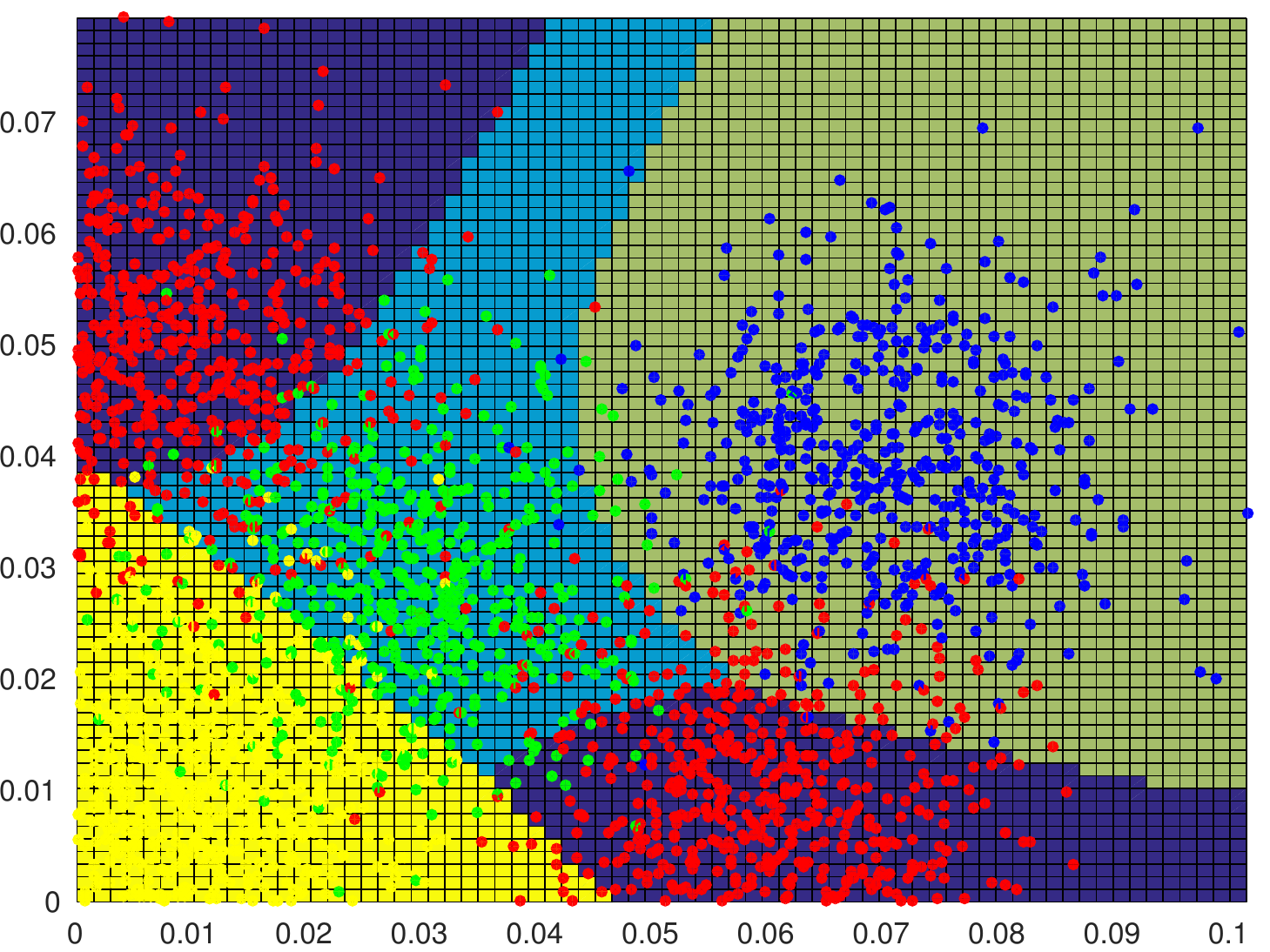}
   \caption{Classification decision boundaries in $\R^2.$ (Best viewed in colors.)}
   \label{fig:toy}
\end{wrapfigure}

%
In this section we present the algorithm together with the main theorem providing the convergence guarantee. We limit the presentation to one hidden layer networks 
to improve the readability of the paper. Our approach can be generalized to feedforward networks of \emph{arbitrary} depth. In particular, we present in Section \ref{sec:hidden2} results for two hidden layers.

We consider in this paper multi-class classification  
where $d$ is the dimension of the feature space and $K$ is the number of classes.
 We use the negative cross-entropy loss defined for label $y \in [K]:=\{1,\ldots,K\}$ and classifier $f:\R^d \rightarrow \R^K$ as
\[ L\big(y,f(x)\big) = - \log\left(\frac{e^{f_y(x)}}{\sum_{j=1}^K e^{f_j(x)}}\right) = - f_y(x) + \log\Big(\sum_{j=1}^K  e^{f_j(x)}\Big).\]
The function class we are using is a feedforward neural network with one hidden layer with $n_1$ hidden units. As activation functions we use real powers of the form of a generalized polyomial, that is for $\alpha \in \R^{n_1}$ with $\alpha_i\geq 1$, $i \in [K]$, we define: 
\begin{align}\label{eq:function-class}
f_r(x)=f_r(w,u)(x)=\sum_{l=1}^{n_1} w_{rl} \Big(\sum_{m=1}^d u_{lm} x_m\Big)^{\alpha_l},
\end{align} 
where $\R_+=\{x \in \R \,|\, x\geq 0\}$ and $w \in \R_+^{K \times n_1}$, $u \in \R_+^{n_1 \times d}$ are the parameters of the network which we optimize. The function class in \eqref{eq:function-class} can be seen as a generalized polynomial in the sense that the powers do not have to be integers. Polynomial neural networks have been recently analyzed in \cite{LivShaSha2014}. Please note that a ReLU activation function makes no sense in our setting as we require the data as well as the weights  to be nonnegative. 
Even though nonnegativity of the weights is a strong constraint, one can model quite complex decision boundaries (see Figure \ref{fig:toy}, where we show the outcome of our method for a toy dataset in $\R^2$).

In order to simplify the notation we use $w=(w_1,\ldots,w_K)$ for the $K$ output units $w_i \in \R_+^{n_1}$, $i=1,\ldots,K$. All output units and the hidden layer are normalized. We  optimize over the set
$$S_+=\big\{(w,u)\in \R^{K\times n_1}_{+}\times\R^{n_1\times d}_{+}\ \big|\ \ \norm{u}_{p_u}=\rho_u, \ \norm{w_i}_{p_w}=\rho_w,\  \forall i=1,\ldots,K\big\}.$$
We also introduce $S_{++}$ where one replaces $\R_+$ with $\R_{++}=\{t\in\R\mid t>0\}$.
The final optimization problem we are going to solve is given as
\begin{align}\label{eq:optipb}
 \qquad &\qquad\max_{(w,u)\in S_+}  \Phi(w,u) \quad  \text{with}\\
\Phi(w,u)&=\frac{1}{n}\sum_{i=1}^n \Big[ -L\Big(y_i,f(w,u)(x^i)\Big) + \sum_{r=1}^K f_r(w,u)(x^i)\Big]+\epsilon\Big( \sum_{r=1}^K \sum_{l=1}^{n_1} w_{r,l}+\sum_{l=1}^{n_1}\sum_{m=1}^d u_{lm}\Big),\nonumber
\end{align}
where $(x^i,y_i) \in \R^{d}_+ \times [K]$, $i=1,\ldots,n$ is the training data.
Note that this is a maximization problem and thus we use minus the loss in the objective so that we are effectively minimizing the loss.
The reason to write this as a maximization problem is that our nonlinear spectral method is inspired by the theory
of (sub)-homogeneous nonlinear eigenproblems on convex cones \cite{NB} which has its origin in the Perron-Frobenius theory for nonnegative matrices. In fact our work is motivated by the closely related Perron-Frobenius theory for multi-homogeneous problems developed in \cite{GTH}. This is also the reason why we have nonnegative weights, as we work on the positive orthant which is a convex cone. Note that  $\epsilon>0$ in the objective can be chosen arbitrarily small and is added out of technical reasons.

In order to state our main theorem we need some additional notation.
For $p\in (1,\infty)$, we let $p'=p/(p-1)$ be the H\"{o}lder conjugate of $p$, and $\psi_p(x)=\sign(x)|x|^{p-1}$. We apply $\psi_p$ to scalars and vectors in which case the function is applied componentwise. 
For a square matrix $A$ we denote its spectral radius by $\rho(A)$.
Finally, we write $\grad_{w_i}\Phi(w,u)$ (resp. $\grad_u\Phi(w,u)$) to denote the gradient of $\Phi$ with respect to $w_i$ (resp. $u$) at $(w,u)$. The mapping
\begin{equation}\label{defG}
G^{\Phi}(w,u) = \bigg(\frac{\rho_w\psi_{p_w'}(\grad_{w_1}\Phi(w,u))}{\norm{\psi_{p_w'}(\grad_{w_{1}}\Phi(w,u))}_{p_w}},\ldots,\frac{\rho_w\psi_{p_w'}(\grad_{w_{K}}\Phi(w,u))}{\norm{\psi_{p_w'}(\grad_{w_K}\Phi(w,u))}_{p_w}},\frac{\rho_u\psi_{p_u'}(\grad_u\Phi(w,u))}{\norm{\psi_{p_u'}(\grad_u\Phi(w,u))}_{p_u}}\bigg),
\end{equation}
defines a sequence converging to the global optimum of \eqref{eq:optipb}. Indeed, we prove:
\begin{thm}\label{mainthm}
Let $\{x^i,y_i\}_{i=1}^n\subset\R_+^d\times[K]$, $p_w,p_u\in (1,\infty)$, $\rho_w,\rho_u>0$, $n_1\in\N$ and $\alpha\in\R^{n_1}$ with $\alpha_i\geq 1$ for every $i\in[n_1]$. Define $\rho_x,\xi_1,\xi_2>0$ as $\rho_x = \max_{i\in[n]}\norm{x^i}_{1}$, $\xi_1 = \rho_w\sum_{l=1}^{n_1}(\rho_u\rho_x)^{\alpha_l}$, $\xi_2=\rho_w\sum_{l=1}^{n_1}\alpha_l(\rho_u\rho_x)^{\alpha_l}$ and let $A\in\R^{(K+1)\times(K+1)}_{++}$ be defined as
\begin{equation*}
\begin{array}{ll}
A_{l,m} = 4(p_w'-1)\xi_1, \phantom{\Big(} & A_{l,K+1} = 2(p_w'-1)(2\xi_2+\norm{\alpha}_{\infty}),\\ 
 A_{K+1,m} = 2(p_u'-1)(2\xi_1+1), & A_{K+1,K+1} = 2(p_u'-1)(2\xi_2+\norm{\alpha}_{\infty}-1),\end{array} \qquad \forall m,l\in[K].
\end{equation*}
If the spectral radius $\rho(A)$ of $A$ satisfies $\rho(A)<1$, then \eqref{eq:optipb} has a unique global maximizer $(w^*,u^*)\in S_{++}$. Moreover, for every $(w^0,u^0)\in S_{++}$, there exists $R>0$ such that 
\begin{equation*}
\lim_{k\to \infty} (w^k,u^k)= (w^*,u^*) \andd \norm{(w^k,u^k)-(w^*,u^*)}_\infty\leq R\,\rho(A)^k\qquad \forall k\in \N,
\end{equation*}
where $(w^{k+1},u^{k+1})=G^{\Phi}(w^k,u^k)$ for every $k\in\N$.
\end{thm}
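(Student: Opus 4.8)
The plan is to cast the fixed-point map $G^{\Phi}$ as a nonlinear eigenmap on a product of positive cones and to apply a Perron--Frobenius-type contraction argument in the Hilbert (or Thompson) projective metric, exactly as in the multi-homogeneous framework of \cite{GTH,NB}. First I would establish the basic structural facts about $\Phi$: since the data $x^i$ are nonnegative, the activations $\big(\sum_m u_{lm}x^i_m\big)^{\alpha_l}$ are well-defined, nonnegative, monotone and (because $\alpha_l\geq 1$) convex on the positive orthant, so each partial gradient $\grad_{w_i}\Phi$ and $\grad_u\Phi$ has \emph{strictly positive} entries whenever $(w,u)\in S_{++}$ (the $\epsilon$-term is precisely what guarantees strict positivity, hence that $G^{\Phi}$ maps $S_{++}$ into $S_{++}$ and is well-defined there). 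One also checks that $G^{\Phi}$ is order-preserving in each block and positively homogeneous of degree $0$ after the normalization, so it descends to a self-map of the product of projective cones $\B := \prod_{i=1}^K \B_{p_w}\times \B_{p_u}$.

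The heart of the argument is a quantitative contraction estimate. I would introduce on each block the Thompson/Hilbert metric $d_i$ and work with the weighted $\ell^\infty$-combination $d\big((w,u),(\t w,\t u)\big) = \max\big(\max_i d_{p_w}(w_i,\t w_i),\, d_{p_u}(u,\t u)\big)$, and show
\begin{equation*}
d\big(G^{\Phi}(w,u),G^{\Phi}(\t w,\t u)\big)\ \leq\ \norm{A}_{\text{op}}\; d\big((w,u),(\t w,\t u)\big)
\end{equation*}
is \emph{not} quite what one wants; instead one proves the sharper componentwise bound in which the vector of projective distances of the images is dominated, entrywise, by $A$ applied to the vector of projective distances of the arguments. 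This is where the matrix $A$ enters: its entries are exactly the products of (i) the $(p'-1)$ factors coming from how $\psi_{p'}$ and the normalization $x\mapsto \rho x/\norm{x}_p$ distort the Thompson metric (a known Lipschitz-type estimate for the projection onto the $\ell^p$-sphere), and (ii) the homogeneity/Lipschitz constants of $\grad_{w_i}\Phi$ and $\grad_u\Phi$ with respect to each block, which are controlled by $\xi_1 = \rho_w\sum_l(\rho_u\rho_x)^{\alpha_l}$, $\xi_2=\rho_w\sum_l\alpha_l(\rho_u\rho_x)^{\alpha_l}$ and $\norm{\alpha}_\infty$ — these are crude but explicit bounds on $f$, its first derivatives, and the log-sum-exp term obtained by maximizing over the compact set $S_+$. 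Deriving these constants carefully, and matching them to the four blocks of $A$, is the main obstacle; the rest is bookkeeping.

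Once the componentwise contraction with matrix $A$ is in hand, the conclusion follows from a standard Perron--Frobenius/Collatz--Wielandt argument: because $A$ is entrywise positive with $\rho(A)<1$, it has a positive left eigenvector $v$ with $v^\top A = \rho(A)v^\top$, and the scalar quantity $\phi\big((w,u),(\t w,\t u)\big) := \langle v,\, \text{(vector of block projective distances)}\rangle$ is then contracted by the genuine factor $\rho(A)<1$ under one step of $G^{\Phi}$. Thus $G^{\Phi}$ is a $\rho(A)$-contraction in the complete metric space $(\B,\phi)$ (completeness of the projective cone in the Thompson metric is classical), so Banach's fixed-point theorem gives a unique fixed point $(\bar w,\bar u)\in\B$, linear convergence of the iterates at rate $\rho(A)^k$ from any starting point in $S_{++}$, and — after converting projective convergence back to convergence in $S_{++}$, using that on the fixed $\ell^p$-spheres the projective and norm topologies agree — the displayed bound $\norm{(w^k,u^k)-(w^*,u^*)}_\infty\leq R\,\rho(A)^k$ with $R$ depending on the initial point.

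It remains to identify this fixed point with the global maximizer of \eqref{eq:optipb}. Here I would use the variational characterization behind $G^{\Phi}$: a point of $S_{++}$ is fixed by $G^{\Phi}$ iff the gradient $\grad\Phi$ is, blockwise, positively proportional to $\psi_{p}$ of the point, i.e. iff $(w,u)$ satisfies the Karush--Kuhn--Tucker conditions for \eqref{eq:optipb} on $S_+$. Since $\Phi$ restricted to the compact set $S_+$ attains a maximum, and we have just shown there is exactly one KKT point in $S_{++}$, it suffices to rule out maximizers on the boundary $S_+\setminus S_{++}$; this again uses the $\epsilon$-term (and positivity/monotonicity of $f$), which forces $\grad\Phi>0$ and hence pushes any boundary point strictly inside, so the maximizer lies in $S_{++}$ and must coincide with the unique fixed point $(w^*,u^*)$. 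Combining the uniqueness of the maximizer, its membership in $S_{++}$, and the contraction estimate yields all three assertions of the theorem.
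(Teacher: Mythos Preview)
Your proposal is correct and follows essentially the same route as the paper: strict positivity of $\nabla\Phi$ (via the $\epsilon$-term) forces the maximizer into $S_{++}$ and identifies critical points with fixed points of $G^{\Phi}$; a componentwise Lipschitz estimate for $G^{\Phi}$ in a product Thompson metric yields the matrix $A$, whose Perron eigenvector supplies the weights that turn the estimate into a genuine $\rho(A)$-contraction, and Banach's theorem does the rest. The only cosmetic differences are that the paper works on the balls $B_{++}$ with the weighted $\ell^1$-sum of Thompson metrics from the outset (rather than passing through projective cones or an $\ell^\infty$-combination), and derives the entries of $A$ via a mean-value-theorem bound on $\langle |\nabla G^{\Phi}|, (w,u)\rangle / G^{\Phi}$ rather than phrasing it as ``distortion of the Thompson metric by $\psi_{p'}$''.
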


Note that one can check for a given model (number of hidden units $n_1$, choice of $\alpha$, $p_w$, $p_u$, $\rho_u$, $\rho_w$) easily if the convergence guarantee to the global
optimum holds by computing the spectral radius of a square matrix of size $K+1$. As our bounds for the matrix $A$ are very conservative, the ``effective''
spectral radius is typically much smaller, so that we have very fast convergence in only a few iterations, see Section \ref{sec:exp} for a discussion. Up to our
knowledge this is the first practically feasible algorithm to achieve global optimality for a non-trivial neural network model. Additionally, compared to stochastic gradient descent, there is no free parameter in the algorithm. Thus no careful tuning of the learning rate is required. The reader might wonder why we add the second term in the objective, where we sum over all outputs. The reason is that we need that the gradient of $G^\Phi$ is strictly positive in $S_+$,
this is why we also have to add the third term for arbitrarily small $\epsilon>0$. In Section \ref{sec:exp} we show that this model achieves competitive results
on a few UCI datasets.

\paragraph{Choice of $\alpha$:} It turns out that in order to get a non-trivial classifier one has to choose $\alpha_1,\ldots,\alpha_{n_1}\geq 1$ so that $\alpha_i\neq \alpha_j$ for every $i,j\in[n_1]$ with $i\neq j$. The reason for this lies in certain invariance properties of the network. Suppose that 
we use a permutation invariant componentwise activation function $\sigma$, 
that is $\sigma(Px)=P\sigma(x)$ for any permutation matrix $P$ and suppose that
$A,B$ are globally optimal weight matrices for a one hidden layer architecture, then for any permutation matrix $P$,
\[ A\sigma(Bx) = AP^T P \sigma(Bx)=AP^T\sigma(PBx),\]
which implies that $A'=AP^T$ and $B'=PB$ yield the same function and thus are also globally optimal. In our setting we know 
that the global optimum is \emph{unique} and thus it has to hold that, $A=AP^T$ \textrm{ and } $B=PB$ for all permutation matrices $P$.
This implies that both $A$ and $B$ have rank one and thus lead to trivial classifiers.
This is the reason why one has to use different $\alpha$ for every unit.

\paragraph{Dependence of $\rho(A)$ on the model parameters: } Let $Q,\t Q\in\R^{m\times m}_+$ and assume $0 \leq Q_{i,j}\leq \t Q_{i,j}$ for every $i,j\in[m]$, then $\rho(Q)\leq \rho(\t Q)$, see Corollary 3.30 \cite{Plemmons}. It follows that $\rho(A)$ in Theorem \ref{mainthm} is increasing w.r.t. $\rho_u,\rho_w,\rho_x$ and the number of hidden units $n_1$. Moreover, $\rho(A)$ is decreasing w.r.t. $p_u,p_w$ and in particular, we note that for any fixed architecture $(n_1,\alpha,\rho_u,\rho_w)$ it is always possible to find $p_u,p_w$ large enough so that $\rho(A)<1$. Indeed, we know from the Collatz-Wielandt formula (Theorem 8.1.26 in \cite{Horn}) that $\rho(A)=\rho(A^T)\leq \max_{i\in[K+1]}(A^Tv)_i/v_i$ for any $v\in\R^{K+1}_{++}$. We use this to derive lower bounds on $p_u,p_w$ that ensure $\rho(A)<1$. Let $v=(p_w-1,\ldots,p_w-1,p_u-1)$, then  $(A^Tv)_i < v_i$ for every $i\in[K+1]$ guarantees $\rho(A)<1$ and is equivalent to
\begin{equation}\label{explicitp}
p_w> 4(K+1)\xi_1+3 \andd p_u >2(K+1)(\norm{\alpha}_{\infty}+2\xi_2)-1,
\end{equation}
where $\xi_1,\xi_2$ are defined as in Theorem \ref{mainthm}. However, we think that our current bounds are sub-optimal so that this choice is quite conservative. Finally, we note that the constant $R$ in Theorem \ref{mainthm} can be explicitly computed when running the algorithm (see Theorem \ref{maincor}).
\paragraph{Proof Strategy: } The following main part of the paper is devoted to the proof of the algorithm. For that we need some further notation. We introduce the 
sets 
\begin{align*}
V_+ &= \R_+^{K \times n_1} \times \R_+^{n_1 \times d}, \quad V_{++} = \R_{++}^{K \times n_1} \times \R_{++}^{n_1 \times d}\\
B_+ &= \big\{(w,u) \in V_{+} \ \big|\ \norm{u}_{p_u}\leq \rho_u, \;\norm{w_i}_{p_w}\leq \rho_w,\,\forall i=1,\ldots,K\},
 \end{align*}
and similarly we define $B_{++}$ replacing $V_+$ by $V_{++}$ in the definition.
The high-level idea of the proof is that we first show that the global
maximum of our optimization problem in \eqref{eq:optipb} is attained in the ``interior'' of $S_+$, that is $S_{++}$. Moreover, we prove that any critical point of \eqref{eq:optipb} in $S_{++}$ is a fixed point of the mapping $G^{\Phi}$. Then we proceed to show that there exists a unique fixed point of $G^{\Phi}$ in $S_{++}$ and thus there is a unique critical point of \eqref{eq:optipb} in $S_{++}$.
As the global maximizer of  \eqref{eq:optipb} exists and is attained in the interior, this fixed point has to be the global maximizer.

Finally, the proof of the fact that $G^{\Phi}$ has a unique fixed point follows by noting that  $G^{\Phi}$ maps $B_{++}$ into $B_{++}$ and the fact
that $B_{++}$ is a complete metric space with respect to the Thompson metric. We provide a characterization of the Lipschitz constant of $G^{\Phi}$ and in turn derive conditions under which $G^{\Phi}$ is a contraction. Finally, the application of the Banach fixed point theorem
yields the uniqueness of the fixed point of $G^{\Phi}$ and the linear convergence rate to the global optimum of \eqref{eq:optipb}. 
In Section \ref{sec:nnet} we show the application of the established framework for our neural networks.

\section{From the optimization problem to fixed point theory}\label{optifix}
%

\begin{lem}\label{le:global-interior}
Let $\Phi: V \rightarrow \R$ be differentiable. If $\nabla \Phi(w,u) \in V_{++}$ for every $(w,u) \in S_{+}$,
then the global maximum of $\Phi$ on $S_+$ is attained in $S_{++}$.
\end{lem}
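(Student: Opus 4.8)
The plan is to argue by contradiction: suppose the global maximum of $\Phi$ on $S_+$ is attained at some point $(w,u)\in S_+\setminus S_{++}$, i.e. at a boundary point where at least one coordinate vanishes. I will then produce a feasible direction along which $\Phi$ strictly increases, contradicting maximality. The key structural fact I will exploit is the hypothesis $\nabla\Phi(w,u)\in V_{++}$, which says every partial derivative of $\Phi$ is \emph{strictly positive} on all of $S_+$, together with the product structure of $S_+$: it is a Cartesian product of $K$ spheres of radius $\rho_w$ in the $p_w$-norm (for the $w_i$ blocks) and one sphere of radius $\rho_u$ in the $p_u$-norm (for the $u$ block), each intersected with the nonnegative orthant.

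First I would fix a boundary maximizer $(w,u)$ and identify a block, say $w_i$ (the $u$ case is identical), in which some coordinate is zero while, since $\norm{w_i}_{p_w}=\rho_w>0$, some other coordinate $(w_i)_a$ is strictly positive. Let $(w_i)_b=0$. The idea is to move mass from coordinate $a$ to coordinate $b$: consider the curve $t\mapsto w_i(t)$ that increases $(w_i)_b$ from $0$ and decreases $(w_i)_a$ so as to stay on the $p_w$-sphere inside the nonnegative orthant, leaving all other blocks fixed. This curve stays in $S_+$ for small $t\geq 0$. Differentiating $\Phi$ along it at $t=0^+$, the chain rule gives a contribution $\partial\Phi/\partial (w_i)_b$ times the (positive) rate of increase of $(w_i)_b$, plus $\partial\Phi/\partial (w_i)_a$ times the rate of change of $(w_i)_a$; because $(w_i)_b=0$ sits at the boundary of the orthant, the constraint $\norm{w_i}_{p_w}=\rho_w$ forces the incoming coordinate to grow at a strictly faster (in fact infinite, since the derivative of $t\mapsto t^{p_w}$ vanishes at $0$) rate than the outgoing coordinate shrinks, so the positivity of \emph{both} partials $\partial\Phi/\partial(w_i)_b>0$ and $\partial\Phi/\partial(w_i)_a>0$ makes the one-sided directional derivative strictly positive. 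Hence $\Phi$ strictly increases, contradicting that $(w,u)$ is a maximizer.

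The main obstacle, and the step deserving care, is the behaviour of the feasibility curve at the orthant boundary: one must check that the reparametrization keeping $\norm{w_i}_{p_w}=\rho_w$ remains differentiable (or at least has a well-defined, possibly $+\infty$, one-sided derivative) as the zero coordinate is perturbed, and that the sign bookkeeping in the chain rule indeed yields a strict increase rather than an ambiguous $0/0$. A clean way to handle this is to avoid curves altogether: since $\Phi$ is differentiable and all partials are strictly positive at $(w,u)$, for any vector $h$ in the nonnegative orthant that is tangent to the sphere constraints (i.e. $\langle \psi_{p_w}(w_i), h_{w_i}\rangle = 0$ for each $i$ and similarly for $u$) and points ``inward'' at the zero coordinates, the function increases to first order along $w\mapsto$ (radial renormalization of $w+th$); but because such an $h$ cannot be chosen with $\langle\nabla\Phi,h\rangle=0$ when $\nabla\Phi\in V_{++}$ unless $h=0$ — and $h=0$ is impossible while staying on the sphere and escaping the boundary — we get the contradiction. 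I would state the argument via a slight perturbation plus renormalization onto $S_+$, using that renormalization is continuous and the radius is bounded away from $0$, and conclude that no boundary point can be a local (hence global) maximizer, so the maximum — which exists by compactness of $S_+$ and continuity of $\Phi$ — lies in $S_{++}$.
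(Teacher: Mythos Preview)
Your proposal is correct and, after the exploratory curve discussion, lands on exactly the paper's argument: at a boundary point the feasible tangent cone to the relevant $p$-sphere contains nonzero vectors supported on the zero coordinates, and since $\nabla\Phi\in V_{++}$ the directional derivative along any such vector is strictly positive, contradicting local maximality. The paper skips the curve attempt entirely and goes straight to this tangent-direction argument, identifying $T_+=\{v\in\R^{n_1}_+: v_i=0 \text{ if } i\notin J\}$ where $J$ is the set of zero coordinates; your fallback reasoning is the same.
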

\ifpaper
\begin{proof}
First note that as $\Phi$ is a continuous function on the compact set $S_+$ the global minimum and maximum are  attained.
A boundary point $(w^b,u^b)$ of $S_+$ is characterized by the fact that at least one of the variables $(w^b,u^b)$ has a zero component.
Suppose w.l.o.g. that the subset $J\subset [n_1]$ of components of  $w^b_1 \in \R^{n_1}_+$ are zero, that is $w^b_{1,J}=0$. 
The normal vector of the $p_w$-sphere at $w^b_1$ is given by $\nu=\psi_{p_w}(w^b_1)$. The set of tangent directions is thus given by
\[ T= \{ v \in \R^{n_1} \,|\, \inner{\nu,v}=0\}.\]
Note that if $(w^b,u^b)$ is a local maximum, then 
\begin{equation}\label{eq:gradient-contra} 
\inner{\nabla_{w^b_1} \Phi(w^b,u^b),t} \leq 0, \quad \forall t \in T_+=\{ v \in \R^{n_1}_+ \,|\, \inner{\nu,v}=0\},
\end{equation}
where $T_+$ is the set of ``positive'' tangent directions, that are pointing inside the set $\{w_1 \in \R^{n_1}_+\,|\, \norm{w_1}_{p_w}=\rho_w\}$. Otherwise there would exist a direction of ascent which leads to a feasible point. 
Now note that $\nu$ has non-negative components as $w^b_1 \in \R^{n_1}_+$. Thus 
\[ T_+ = \{ v \in \R^{n_1}_+ \,|\, v_i=0 \textrm{ if } i \notin J\}.\]
However, by assumption $\nabla_{w^b_1} \Phi(w^b,u^b)$ is a vector with strictly positive components and thus \eqref{eq:gradient-contra} can
never be fulfilled as $T_+$ contains only vectors with non-negative components and at least one of the components is strictly positive as $J \neq [n_1]$.
Finally, as the global maximum is attained in $S_+$ and no local maximum exists at the boundary, the global maximum has to be attained in $S_{++}$.
\end{proof}
\fi
We now identify critical points of the objective $\Phi$ in $S_{++}$ with fixed points of $G^{\Phi}$  in $S_{++}$. 


\begin{lem}\label{le:criticalfixed}
Let $\Phi: V \rightarrow \R$ be differentiable. If $\nabla \Phi(w,u) \in V_{++}$ for all $(w,u)\in S_{++}$, then $(w^*,u^*)$ is a critical point of $\Phi$ in $S_{++}$ if and only if it is a fixed point of $G^{\Phi}$.
\end{lem}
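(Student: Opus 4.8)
The plan is to use the method of Lagrange multipliers, exploiting that $S_{++}$ is a smooth manifold (a product of $p_w$- and $p_u$-spheres of positive radii, intersected with the open positive orthant), so that criticality of $\Phi$ on $S_{++}$ is equivalent to the gradient of $\Phi$ lying in the span of the gradients of the constraint functions. Concretely, at a point $(w,u)\in S_{++}$ the active constraints are $\norm{w_i}_{p_w}^{p_w}=\rho_w^{p_w}$ for $i\in[K]$ and $\norm{u}_{p_u}^{p_u}=\rho_u^{p_u}$, whose gradients (up to positive scalar factors) are $\psi_{p_w}(w_i)$ in the $w_i$-block and $\psi_{p_u}(u)$ in the $u$-block; these are nonzero on $S_{++}$ since all components are strictly positive. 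Hence $(w^*,u^*)$ is critical on $S_{++}$ if and only if there exist multipliers $\lambda_1,\dots,\lambda_K,\mu\in\R$ with
\[
\grad_{w_i}\Phi(w^*,u^*) = \lambda_i\,\psi_{p_w}(w^*_i)\quad(i\in[K]),\qquad \grad_u\Phi(w^*,u^*)=\mu\,\psi_{p_u}(u^*).
\]

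The next step is to determine the signs of the multipliers. By hypothesis $\grad\Phi(w^*,u^*)\in V_{++}$, so every component of $\grad_{w_i}\Phi$ and $\grad_u\Phi$ is strictly positive; since $w^*_i,u^*$ have strictly positive components, $\psi_{p_w}(w^*_i)$ and $\psi_{p_u}(u^*)$ also have strictly positive components, forcing $\lambda_i>0$ for all $i$ and $\mu>0$. Now I invert $\psi_{p_w}$, using that its inverse (on the relevant domain) is $\psi_{p_w'}$ because $(p_w-1)(p_w'-1)=1$ and both act componentwise preserving signs: applying $\psi_{p_w'}$ to both sides gives $\psi_{p_w'}(\grad_{w_i}\Phi(w^*,u^*)) = \lambda_i^{\,p_w'-1}\,w^*_i$, and similarly $\psi_{p_u'}(\grad_u\Phi(w^*,u^*)) = \mu^{\,p_u'-1}\,u^*$. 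Taking $p_w$-norms in the first identity and using $\norm{w^*_i}_{p_w}=\rho_w$ pins down $\lambda_i^{\,p_w'-1} = \norm{\psi_{p_w'}(\grad_{w_i}\Phi(w^*,u^*))}_{p_w}/\rho_w$, and likewise for $\mu$; substituting back shows exactly that the $i$-th $w$-block of $G^{\Phi}(w^*,u^*)$ equals $w^*_i$ and the $u$-block equals $u^*$, i.e. $(w^*,u^*)$ is a fixed point of $G^{\Phi}$.

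For the converse, suppose $(w^*,u^*)=G^{\Phi}(w^*,u^*)$. First one checks $G^{\Phi}$ is well defined at such a point: strict positivity of $\grad\Phi$ on $S_{++}$ guarantees the denominators $\norm{\psi_{p_w'}(\grad_{w_i}\Phi)}_{p_w}$ and $\norm{\psi_{p_u'}(\grad_u\Phi)}_{p_u}$ are nonzero, and by construction each block of $G^{\Phi}$ has $p_w$- (resp. $p_u$-) norm equal to $\rho_w$ (resp. $\rho_u$) with strictly positive components, so $G^{\Phi}$ maps into $S_{++}$ and the fixed-point equation is consistent. The equation $w^*_i = \rho_w\psi_{p_w'}(\grad_{w_i}\Phi(w^*,u^*))/\norm{\psi_{p_w'}(\grad_{w_i}\Phi(w^*,u^*))}_{p_w}$ says $w^*_i$ is a positive scalar multiple of $\psi_{p_w'}(\grad_{w_i}\Phi(w^*,u^*))$; applying $\psi_{p_w}$ (the componentwise inverse of $\psi_{p_w'}$, again using $(p_w-1)(p_w'-1)=1$) shows $\grad_{w_i}\Phi(w^*,u^*)$ is a positive scalar multiple of $\psi_{p_w}(w^*_i)$, and similarly for the $u$-block — which is precisely the Lagrange condition above, so $(w^*,u^*)$ is critical on $S_{++}$.

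I expect the main subtlety to be bookkeeping rather than conceptual: one must be careful that $\psi_p$ and $\psi_{p'}$ are genuine mutual inverses on the positive orthant (which reduces to $(p-1)(p'-1)=1$, true by definition of the Hölder conjugate) and that all scalar factors arising are strictly positive — this is where the hypothesis $\grad\Phi\in V_{++}$ is used twice, once to sign the multipliers and once to ensure $G^{\Phi}$ is well defined. A minor point worth stating explicitly is why the only active constraints on $S_{++}$ are the norm equalities and not the nonnegativity constraints: on $S_{++}$ every coordinate is strictly positive, so those inequality constraints are inactive and do not enter the Lagrange conditions, which is exactly what makes the clean equivalence possible.
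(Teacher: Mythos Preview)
Your proposal is correct and follows essentially the same route as the paper: Lagrange multipliers on the product of spheres, positivity of the multipliers forced by $\nabla\Phi\in V_{++}$, inversion via $\psi_{p'}\circ\psi_p=\mathrm{id}$, and then matching the resulting expressions with the definition of $G^{\Phi}$ in both directions. Your write-up is in fact a bit more explicit than the paper's on why the nonnegativity constraints are inactive in $S_{++}$ and on the algebraic identity $(p-1)(p'-1)=1$, but the argument is the same.
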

\ifpaper
\begin{proof}
The Lagrangian of $\Phi$ constrained to the unit sphere $S$ is given by
\begin{equation*}
\Lagr(w,u,\lambda) = \Phi(w,u) -\lambda_{K+1}(\norm{u}_{p_u}-\rho_u) - \sum_{j=1}^{K}\lambda_i (\norm{w_j}_{p_w}-\rho_w).
\end{equation*}
A necessary and sufficient condition \cite{Ber1999} for $(w,u)\in S_{++}$ being a critical point of $\Phi$ is the existence of $\lambda_i$ with
\begin{equation}
\grad_{w_j} \Phi(w,u)= \lambda_j \psi_{p_w}(w_j) \quad \forall j\in[K] \andd \grad_{u} \Phi(w,u)= \lambda_{K+1}\psi_{p_u}(u).
\end{equation}
Note that as $w_j \in \R^{n_1}_{++}$ and the gradients are strictly positive in $S_{++}$ the $\lambda_i$, $i=1,\ldots,K+1$ have to be 
strictly positive. Noting that $\psi_{p'}(\psi_{p}(x))=x$, we get
\begin{equation}\label{eq:spectraleq}
\psi_{p_w'}\big(\grad_{w_j} \Phi(w,u)\big)= \lambda_j^{p_w'-1}w_j \quad \forall j\in[K] \andd \psi_{p_u'}\big(\grad_{u} \Phi(w,u)\big)= \lambda_{K+1}^{p_u'-1}u.
\end{equation}
In particular,  $(w^*,u^*)$ is a critical point of $\Phi$ in $S_{++}$  if and only if it satisfies \eqref{eq:spectraleq}. Finally, note that
\[ \Big(  \frac{\rho_w\psi_{p_w'}\big(\grad_{w_1} \Phi(w,u)\big)}{\norm{\psi_{p_w'}\big(\grad_{w_1} \Phi(w,u)\big)}_{p_w}},\ldots, \frac{ \rho_w\psi_{p_w'}\big(\grad_{w_K} \Phi(w,u)\big)}{\norm{\psi_{p_w'}\big(\grad_{w_K} \Phi(w,u)\big)}_{p_w}}, \frac{\rho_u\psi_{p_u'}\big(\grad_{u} \Phi(w,u)\big)}{\norm{\psi_{p_u'}\big(\grad_{u} \Phi(w,u)\big)}_{p_u}}\Big) \in S_{++},\]
as the gradient is strictly positive on $S_{++}$ and thus $G^{\Phi}:S_{++} \rightarrow S_{++}$ defined in \eqref{defG} is well-defined
and if $(w^*,u^*)$ is a critical point, then by \eqref{eq:spectraleq} it holds $G^{\Phi}(w^*,u^*)=(w^*,u^*)$. On the other hand if
 $G^{\Phi}(w^*,u^*)=(w^*,u^*)$, then
\begin{align*}
\rho_w \frac{\psi_{p_w'}\big(\grad_{w_j} \Phi(w^*,u^*)\big)}{\norm{\psi_{p_w'}\big(\grad_{w_j} \Phi(w^*,u^*)\big)}_{p_w}}&=w_j^*, \quad j=1,\ldots,K, \qquad
\rho_u\frac{\psi_{p_u'}\big(\grad_{u} \Phi(w^*,u^*)\big)}{\norm{\psi_{p_u'}\big(\grad_{u} \Phi(w^*,u^*)\big)}_{p_u}} =u^*
\end{align*}
and thus there exists $\lambda_j=\frac{\norm{\psi_{p_w'}(\grad_{w_j} \Phi(w^*,u^*))}_{p_w}}{\rho_w}$, $j\in [K]$ and
$\lambda_{K+1}=\frac{\norm{\psi_{p_u'}(\grad_{u} \Phi(w^*,u^*))}_{p_u}}{\rho_u}$ such that \eqref{eq:spectraleq} holds
and thus  $(w^*,u^*)$ is a critical point of $\Phi$ in $S_{++}$.
\end{proof}
\fi
Our goal is to apply the Banach fixed point theorem to $G^{\Phi}\colon B_{++}\to S_{++}\subset B_{++}$. We recall this theorem for the convenience of the reader.
\begin{thm}[Banach fixed point theorem e.g. \cite{KirKha2001}]\label{Banachthm}
Let $(X,d)$ be a complete metric space with a mapping $T:X \rightarrow X$ such that  $d(T(x),T(y))\leq q\,d(x,y)$ for $q \in [0,1)$ and all $x,y \in X$.
Then $T$ has a unique fixed-point $x^*$ in $X$, that is $T(x^*)=x^*$  and the sequence defined as $x^{n+1}=T(x^n)$ with $x^0\in X$ converges
$\lim_{n\rightarrow \infty} x^n = x^*$ with linear convergence rate
\[ d(x^{n},x^*) \; \leq \; \frac{q^n}{1-q} \, d(x^1,x^0).\]
\end{thm}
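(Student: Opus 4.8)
The plan is to prove existence, uniqueness, and the explicit rate for the Picard iterates $x^{n+1}=T(x^n)$ directly from the contraction hypothesis, invoking completeness exactly once. First I would fix an arbitrary starting point $x^0\in X$ and establish by induction on $n$ that consecutive iterates contract geometrically, namely $d(x^{n+1},x^n)\leq q^n\,d(x^1,x^0)$. The base case $n=0$ is an identity, and the inductive step is immediate from the Lipschitz bound $d(x^{n+1},x^n)=d(T(x^n),T(x^{n-1}))\leq q\,d(x^n,x^{n-1})$ together with the induction hypothesis.

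Next I would show that $(x^n)_{n\in\N}$ is a Cauchy sequence. For $m>n$ the triangle inequality gives $d(x^m,x^n)\leq\sum_{k=n}^{m-1}d(x^{k+1},x^k)$, and inserting the geometric bound from the first step yields $d(x^m,x^n)\leq\big(\sum_{k=n}^{m-1}q^k\big)\,d(x^1,x^0)\leq\frac{q^n}{1-q}\,d(x^1,x^0)$, where the last inequality sums the geometric series using $q\in[0,1)$. Since $q^n\to 0$, the right-hand side tends to $0$ as $n\to\infty$ uniformly in $m$, so $(x^n)$ is Cauchy; by completeness of $(X,d)$ it converges to some limit $x^*\in X$.

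Then I would verify that $x^*$ is a fixed point and that it is the only one. A contraction is Lipschitz, hence continuous, so passing to the limit in the recursion gives $T(x^*)=T(\lim_{n}x^n)=\lim_{n}T(x^n)=\lim_{n}x^{n+1}=x^*$. For uniqueness, if $x^*$ and $y^*$ are both fixed points then $d(x^*,y^*)=d(T(x^*),T(y^*))\leq q\,d(x^*,y^*)$, and since $q<1$ this forces $d(x^*,y^*)=0$, i.e. $x^*=y^*$; in particular the limit does not depend on the chosen $x^0$. Finally, the stated rate is recovered by letting $m\to\infty$ in the Cauchy estimate $d(x^m,x^n)\leq\frac{q^n}{1-q}\,d(x^1,x^0)$ and using continuity of the metric in its first argument, which yields $d(x^*,x^n)\leq\frac{q^n}{1-q}\,d(x^1,x^0)$ for every $n\in\N$.

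There is no serious obstacle here; the only point requiring care is the Cauchy estimate, where one must sum the geometric tail so that the bound depends on $n$ alone and not on $m$. This uniformity, which is exactly what the assumption $q<1$ provides, is what simultaneously delivers the Cauchy property and, in the limit $m\to\infty$, the explicit linear convergence rate.
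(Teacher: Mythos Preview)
Your proof is correct and is the standard textbook argument for the Banach fixed point theorem. The paper does not actually prove this statement; it merely recalls it with a citation to \cite{KirKha2001}, so there is no in-paper proof to compare against.
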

So, we need to endow $B_{++}$ with a metric $\mu$ so that $(B_{++},\mu)$ is a complete metric space. A popular metric for the study of nonlinear eigenvalue problems on the positive orthant is the so-called Thompson metric $d\colon\R^m_{++}\times\R^m_{++}\to\R_+$ \cite{Thompson} defined as
\begin{equation*}
d(z,\tilde z) = \norm{\ln(z)-\ln(\tilde z)}_{\infty} \qquad \text{where} \qquad \ln(z)=\big(\ln(z_1),\ldots,\ln(z_m)\big).
\end{equation*}
Using the known facts that $(\R^n_{++},d)$ is a complete metric space and its topology coincides with the norm topology (see e.g. Corollary 2.5.6 and Proposition 2.5.2 \cite{NB}), we prove:
\begin{lem}
For $p\in(1,\infty)$ and $\rho>0$, $(\{z\in\R^n_{++}\mid \norm{z}_p\leq \rho\},d)$ is a complete metric space.
\end{lem}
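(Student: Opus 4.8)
The plan is to realize $X := \{z\in\R^n_{++}\mid \norm{z}_p\leq \rho\}$ as a closed subset of the complete metric space $(\R^n_{++},d)$ and then invoke the standard fact that a closed subspace of a complete metric space is itself complete. Concretely, I would take an arbitrary $d$-Cauchy sequence $(z^k)_{k\in\N}$ in $X$. Since $X\subseteq \R^n_{++}$ and $(\R^n_{++},d)$ is complete by the quoted Corollary 2.5.6 of \cite{NB}, there exists $z^*\in\R^n_{++}$ with $d(z^k,z^*)\to 0$. The only remaining point is to verify that $z^*\in X$, i.e. that $\norm{z^*}_p\leq\rho$.

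For that I would use the second quoted fact, that on $\R^n_{++}$ the topology induced by $d$ coincides with the norm topology (Proposition 2.5.2 of \cite{NB}). Hence $d(z^k,z^*)\to 0$ implies $\norm{z^k-z^*}_p\to 0$; since $\norm{z^k}_p\leq\rho$ for all $k$, continuity of the norm gives $\norm{z^*}_p\leq\rho$, so $z^*\in X$. Thus every $d$-Cauchy sequence in $X$ has its limit in $X$, which is precisely completeness of $(X,d)$. An equivalent, slightly more structural phrasing is to observe directly that $X = \R^n_{++}\cap\{z\in\R^n\mid\norm{z}_p\leq\rho\}$ is the intersection of $\R^n_{++}$ with a norm-closed ball, hence closed in the norm topology of $\R^n_{++}$, hence closed in $(\R^n_{++},d)$ by coincidence of the two topologies, and a closed subset of a complete metric space is complete.

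I do not anticipate any real obstacle. The single subtle point worth stating carefully is that the limit supplied by completeness of $(\R^n_{++},d)$ genuinely lies in $\R^n_{++}$ — so that $z^*$ has a well-defined $p$-norm in the first place — but this is exactly what the completeness statement for $(\R^n_{++},d)$ provides, so it need not be re-derived. Everything else is a routine application of the two cited properties of the Thompson metric together with continuity of $\norm{\noarg}_p$.
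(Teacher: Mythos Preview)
Your argument is correct and is essentially identical to the paper's own proof: take a $d$-Cauchy sequence in $X$, use completeness of $(\R^n_{++},d)$ to get a limit $z^*\in\R^n_{++}$, use coincidence of the Thompson and norm topologies to pass to norm convergence, and conclude $\norm{z^*}_p\leq\rho$ by continuity of the norm. The only slip is that you have the two references swapped relative to the paper (completeness is Proposition~2.5.2 and topology coincidence is Corollary~2.5.6 in \cite{NB}, not the other way around).
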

\ifpaper
\begin{proof}
Let $(z^k)_k\subset\{z\in\R^n_{++}\mid \norm{z}_p\leq \rho\}$ be a Cauchy sequence w.r.t. to the metric $d$. We know from Proposition 2.5.2 in \cite{NB} that $(\R^{n}_{++},d)$ is a complete metric space and thus there exists $z^*\in\R^n_{++}$ such that $z^k$ converge to $z^*$ w.r.t. $d$. Corollary 2.5.6 in \cite{NB} implies that the topology of $(\R^n_{++},d)$ coincide with the norm topology implying that $\lim_{k\to\infty} z^k=z^*$ w.r.t. the norm topology. Finally, since $z\mapsto \norm{z}_p$ is a continuous function, we get $\norm{z^*}_p= \lim_{k\to\infty} \norm{z^k}_p\leq \rho$, i.e. $z^*\in\{z\in\R^n_{++}\mid \norm{z}_p\leq 1\}$ which proves our claim.
\end{proof}
\fi
Now, the idea is to see $B_{++}$ as a product of such metric spaces. For $i=1,\ldots,K$, let $B^i_{++}=\{w_i\in\R^{n_1}_{++}\mid \norm{w_i}_{p_w}\leq \rho_w\}$ and $d_i(w_i,\t w_i)=\gamma_i\norm{\ln(w_i)-\ln(\tilde w_i)}_{\infty} $ for some constant $\gamma_i>0$. Furthermore, let $B_{++}^{K+1}=\{u\in\R^{n_1\times d}_{++}\mid \norm{u}_{p_u}\leq \rho_u\}$ and $d_{K+1}(u,\t u)=\gamma_{K+1}\norm{\ln(u)-\ln(\t u)}_{\infty}$. Then $(B_{++}^i,d_i)$ is a complete metric space for every $i\in[K+1]$ and $B_{++}= B_{++}^1\times \ldots \times B_{++}^{K} \times B_{++}^{K+1}$. It follows that $(B_{++},\mu)$ is a complete metric space with $\mu\colon B_{++} \times B_{++} \rightarrow \R_+$ defined as
\begin{equation*}
\mu\big((w,u),(\t w,\t u)\big)=\sum_{i=1}^K\gamma_i\norm{\ln(w_i)-\ln(\t w_i)}_{\infty}+\gamma_{K+1}\norm{\ln(u)-\ln(\tilde u)}_{\infty}.
\end{equation*}
The motivation for introducing the weights $\gamma_1,\ldots,\gamma_{K+1}>0$ is given by the next theorem. We provide a characterization of the Lipschitz constant of a mapping $F\colon B_{++}\to B_{++}$ with respect to $\mu$. Moreover, this Lipschitz constant can be minimized by a smart choice of $\gamma$. For $i\in [K],a,j\in[n_1],b\in [d]$, we write $F_{w_{i,j}}$ and $F_{u_{ab}}$ to denote the components of $F$ such that $F=(F_{w_{1,1}},\ldots,F_{w_{1,n_1}},F_{w_{2,1}},\ldots,F_{w_{K,n_1}},F_{u_{11}},\ldots,F_{u_{n_1d}})$.
\begin{lem}\label{contract}
Suppose that $F\in C^1(B_{++},V_{++})$ and $A\in\R^{(K+1)\times (K+1)}_{+}$ satisfies
\begin{equation*}
\ps{|\grad_{w_k} F_{w_{i,j}}(\bw{},u)|}{\bw{k}} \leq A_{i,k}\, F_{w_{i,j}}(\bw{},u), \quad \ps{|\grad_{u} F_{w_{i,j}}(\bw{},u)|}{u} \leq A_{i,K+1}\,F_{w_{i,j}}(\bw{},u)
\end{equation*}
and
\begin{equation*}
\ps{|\grad_{w_k} F_{u_{ab}}(\bw{},u)|}{\bw{k}} \leq A_{K+1,k}\,F_{u_{ab}}(\bw{},u), \quad \ps{|\grad_{u} F_{u_{ab}}(\bw{},u)|}{u} \leq A_{K+1,K+1}\,F_{u_{ab}}(\bw{},u)
\end{equation*}
 for all $i,k\in [K]$, $a,j\in[n_1]$, $b\in [d]$ and $(\bw{},u)\in B_{++}$.
Then, for every $(w,u),(\t w,\t u)\in B_{++}$ it holds
\begin{equation*}
\mu\big(F(w,u),F(\t w,\t u)\big)\,\leq\, U\;\mu\big((w,u),(\t w,\t u)\big)
\qquad\text{with}\qquad U=\max_{k\in[K+1]}\frac{(A^T\gamma)_k}{\gamma_k}.
\end{equation*}
\end{lem}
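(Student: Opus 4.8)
\emph{Proof strategy.} The plan is to pass to logarithmic coordinates, in which the Thompson metric becomes a sup-norm distance and the $p$-balls become convex, and then to run a mean-value argument along straight segments; the gradient hypotheses on $F$ are exactly what is needed to control the resulting directional derivatives.

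First I would record an elementary geometric fact: if $z,\t z\in\R^n_{++}$ with $\norm{z}_p\leq\rho$ and $\norm{\t z}_p\leq\rho$, then the ``log-segment'' $z^{(t)}$ defined by $(z^{(t)})_j=z_j^{\,t}\,\t z_j^{\,1-t}$ satisfies $\norm{z^{(t)}}_p\leq\rho$ for all $t\in[0,1]$; this follows from H\"older's inequality applied with conjugate exponents $1/t$ and $1/(1-t)$. Applying this blockwise, for any $(w,u),(\t w,\t u)\in B_{++}$ the path $(w^{(t)},u^{(t)})$ obtained by log-interpolating each of the blocks $w_1,\ldots,w_K,u$ stays inside $B_{++}$, which is what makes the mean-value argument legitimate.

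Next, fix a component, say $F_{w_{i,j}}$, and set $g(t)=\ln F_{w_{i,j}}(w^{(t)},u^{(t)})$, which is well-defined and $C^1$ on $[0,1]$ since $F$ maps into $V_{++}$. Writing $\delta_k=\norm{\ln(w_k)-\ln(\t w_k)}_{\infty}$ for $k\in[K]$ and $\delta_{K+1}=\norm{\ln(u)-\ln(\t u)}_{\infty}$, so that $\mu\big((w,u),(\t w,\t u)\big)=\sum_{k=1}^{K+1}\gamma_k\delta_k$, the chain rule together with $\tfrac{d}{dt}(w_k^{(t)})_{j'}=(w_k^{(t)})_{j'}\big(\ln(w_k)_{j'}-\ln(\t w_k)_{j'}\big)$ gives
\[
|g'(t)|\ \leq\ \sum_{k=1}^{K}\delta_k\,\frac{\ps{|\grad_{w_k}F_{w_{i,j}}(w^{(t)},u^{(t)})|}{w_k^{(t)}}}{F_{w_{i,j}}(w^{(t)},u^{(t)})}\ +\ \delta_{K+1}\,\frac{\ps{|\grad_{u}F_{w_{i,j}}(w^{(t)},u^{(t)})|}{u^{(t)}}}{F_{w_{i,j}}(w^{(t)},u^{(t)})}\ \leq\ \sum_{k=1}^{K}A_{i,k}\delta_k+A_{i,K+1}\delta_{K+1},
\]
where the last inequality uses the hypotheses of the lemma evaluated at $(w^{(t)},u^{(t)})\in B_{++}$. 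Integrating over $t\in[0,1]$ and then maximizing over $j\in[n_1]$ yields $\norm{\ln F_{w_i}(w,u)-\ln F_{w_i}(\t w,\t u)}_{\infty}\leq\sum_{k=1}^{K}A_{i,k}\delta_k+A_{i,K+1}\delta_{K+1}$, and the identical argument for the components $F_{u_{ab}}$ gives $\norm{\ln F_u(w,u)-\ln F_u(\t w,\t u)}_{\infty}\leq\sum_{k=1}^{K}A_{K+1,k}\delta_k+A_{K+1,K+1}\delta_{K+1}$.

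Finally I would assemble these estimates into the definition of $\mu$: multiplying the $i$-th bound by $\gamma_i$ and summing,
\[
\mu\big(F(w,u),F(\t w,\t u)\big)\ \leq\ \sum_{i=1}^{K+1}\gamma_i\sum_{k=1}^{K+1}A_{i,k}\delta_k\ =\ \sum_{k=1}^{K+1}(A^T\gamma)_k\,\delta_k\ =\ \sum_{k=1}^{K+1}\frac{(A^T\gamma)_k}{\gamma_k}\,\gamma_k\delta_k\ \leq\ U\sum_{k=1}^{K+1}\gamma_k\delta_k\ =\ U\,\mu\big((w,u),(\t w,\t u)\big),
\]
which is the claim. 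The one step that is not completely routine is the first: one must verify that each $p$-ball is stable under geometric interpolation, so that the path used in the mean-value argument never leaves $B_{++}$; the rest is the chain rule in log-coordinates combined with the $\ell_1$--$\ell_\infty$ H\"older duality that produces the sup-norm distances appearing in $\mu$.
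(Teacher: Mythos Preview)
Your argument is correct and follows essentially the same route as the paper's proof: pass to logarithmic coordinates, use a mean-value-type argument along the log-segment (the paper invokes the mean value theorem at a single interior point rather than integrating $g'(t)$, but this is the same estimate), and show the log-segment stays in $B_{++}$ (the paper does this via convexity of the exponential and the triangle inequality for $\norm{\cdot}_p$ rather than H\"older, again equivalent). The final assembly into $\mu$ via $\sum_i\gamma_i\sum_k A_{i,k}\delta_k=\sum_k(A^T\gamma)_k\delta_k$ is identical.
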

\ifpaper
\begin{proof}
Let $i\in [K]$ and $j\in[n_1]$. First, we show that for $(w,u),(\tilde w,\t u)\in B_{++}$, one has
\begin{equation*}
\Big|\ln\big(F_{w_{i,j}}(w,u)\big)-\ln\big(F_{w_{i,j}}(\tilde w,\t u)\big)\Big|\leq\sum_{s=1}^KA_{i,s}\norm{\ln(w_s)-\ln(\t w_s)}_{\infty}+A_{i,K+1}\norm{\ln(u)-\ln(\tilde u)}_{\infty}.
\end{equation*}
Let $(w,u),(\tilde w,\t u)\in B_{++}$. If $(w,u)=(\tilde w,\t u)$, there is nothing to prove. So, suppose that $(w,u)\neq (\tilde w,\t u)$. Let $\hat B=\{\ln(v)\mid v\in B_{++}\}\subset V$ and $g\colon \hat B \to \R$ with
\begin{equation*}
g(\bar w,\bar u)=\ln\Big(F_{w_{i,j}}\big(\exp(\bar w,\bar u)\big)\Big) \quad \text{where} \quad \exp(\bar w,\bar u) = \big(e^{\bar w_{1,1}},\ldots,e^{\bar w_{K,n_1}},e^{\bar u_{11}},\ldots, e^{\bar u_{n_1d}}\big).
\end{equation*}
Note that $g$ is continuously differentiable because it is the composition of continuously differentiable mappings.
Set $(v,x)= \big(\ln(w),\ln(u)\big)$ and $(y,z)=\big(\ln(\t w),\ln(\t u)\big)$. By the mean value theorem, there exists $t\in(0,1)$, such that $(\hat w,\hat u)=t(v,x)+(1-t)(y,z)$ satisfies
\begin{equation*}
g(v,x)-g(y,z)=\ps{\grad g(\hat w,\hat u)}{(v,x)-(y,z)}.
\end{equation*}
Let $(\bar w,\bar u)=\exp(\hat w,\hat u)$. It follows that
\begin{align*}
\ln\big(F_{w_{i,j}}(w,u)\big)-\ln\big(F_{w_{i,j}}(\tilde w,\t u)\big)&=\ps{\grad g(\hat w,\hat u)}{(v,x)-(y,z)} \\
&= \frac{\langle\big(\grad F_{w_{i,j}}(\bar w,\bar u)\big)\circ(\bar w,\bar u),(v,x)-(y,z)\big\rangle}{F_{w_{i,j}}(\bar w,\bar u)}
\end{align*}
where $\circ$ denotes the Hadamard product. Note that by convexity of the exponential and $\norm{\cdot}_{p}$ we have $(\bar w,\bar u)\in B_{++}$ as
\begin{align*} \norm{\bar{w}}_{p_w} &= \norm{e^{t v + (1-t)y}}_{p_w} \leq \norm{t e^{v} + (1-t)e^{y}}_{p_w} \leq t\norm{e^v}_{p_w} +(1-t) \norm{e^y}_{p_w} \\
                           &= t \norm{w}_{p_w} + (1-t)\norm{\tilde{w}}_{p_w}\leq \rho_w,
\end{align*}
where we have used in the second step convexity of the exponential and that the $p$-norms are order preserving, $\norm{x}_{p}\leq \norm{y}_p$ if $x_i \leq z_i$ for all $i$. A similar argument shows $\norm{\bar u}_{p_u}\leq \rho_u$. Hence
\begin{align*}
&\big|\ln\big(F_{w_{i,j}}(w,u)\big)-\ln\big(F_{w_{i,j}}(\tilde w,\t u)\big)\big|\\
&\quad\leq\sum_{k=1}^{K} \frac{\big|\langle\big(\grad_{w_k} F_{w_{i,j}}(\bar w,\bar u)\big)\circ\bar w_k,v_k-y_k\big\rangle\big|}{F_{w_{i,j}}(\bar w,\bar u)}+ \frac{|\langle\big(\grad_u F_{w_{i,j}}(\bar w,\bar u)\big)\circ\bar u,x-z\big\rangle |}{F_{w_{i,j}}(\bar w,\bar u)}\\
&\quad\leq\sum_{k=1}^{K} \frac{\norm{\big(\grad_{w_k} F_{w_{i,j}}(\bar w,\bar u)\big)\circ\bar w_k}_{1}}{F_{w_{i,j}}(\bar w,\bar u)}\norm{v_k-y_k}_{\infty}+ \frac{\norm{\big(\grad_u F_{w_{i,j}}(\bar w,\bar u)\big)\circ\bar u}_{1}}{F_{w_{i,j}}(\bar w,\bar u)}\norm{x-z}_{\infty}\\
&\quad=\sum_{k=1}^{K} \frac{\ps{\big|\grad_{w_k} F_{w_{i,j}}(\bar w,\bar u)\big|}{\bar w_k}}{F_{w_{i,j}}(\bar w,\bar u)}\norm{\ln(w_k)-\ln(\t w_k)}_{\infty}+ \frac{\ps{\big|\grad_u F_{w_{i,j}}(\bar w,\bar u)\big|}{\bar u}}{F_{w_{i,j}}(\bar w,\bar u)}\norm{\ln(u)-\ln(\t u)}_{\infty}\\
&\quad\leq\sum_{k=1}^{K} A_{i,k}\norm{\ln(w_k)-\ln(\t w_k)}_{\infty}+ A_{i,K+1}\norm{\ln(u)-\ln(\t u)}_{\infty}.
\end{align*}
In particular, taking the maximum over $j\in [n_1]$ shows that for every $i\in [K]$ we have
\begin{equation*}
\norm{\ln\big(F_{w_{i}}(w,u)\big)-\ln\big(F_{w_{i}}(\tilde w,\t u)\big)}_{\infty}\leq\sum_{s=1}^KA_{i,s}\norm{\ln(w_s)-\ln(\t w_s)}_{\infty}+A_{i,K+1}\norm{\ln(u)-\ln(\tilde u)}_{\infty}.
\end{equation*}
A similar argument shows that $\norm{\ln\big(F_{u}(w,u)\big)-\ln\big(F_{u}(\tilde w,\t u)\big)}_{\infty}$ is upper bounded by
\begin{equation*}
\sum_{s=1}^KA_{K+1,s}\norm{\ln(w_s)-\ln(\t w_s)}_{\infty}+A_{K+1,K+1}\norm{\ln(u)-\ln(\tilde u)}_{\infty}.
\end{equation*}
So, we finally get
\begin{align*}
\mu\big(F(w,u),F(\t w,\t u)\big)&= \sum_{i=1}^K\gamma_{i}\norm{\ln\big(F_{w_{i}}(w,u)\big)-\ln\big(F_{w_{i}}(\tilde w,\t u)\big)}_{\infty}\\ 
&\qquad\qquad+\gamma_{K+1}\norm{\ln\big(F_{u}(w,u)\big)-\ln\big(F_{u}(\tilde w,\t u)\big)}_{\infty}\\
&\leq \sum_{s=1}^K(A^T\gamma)_s\norm{\ln(w_s)-\ln(\t w_s)}_{\infty}+(A^T\gamma)_{K+1}\norm{\ln(u)-\ln(\tilde u)}_{\infty}\\
&\leq U\mu\big((w,u),(\t w,\t u)\big)\cored{.}\qedhere
\end{align*}
\end{proof}
\fi
Note that, from the Collatz-Wielandt ratio for nonnegative matrices, we know that the constant $U$ in Lemma \ref{contract} is lower bounded by the spectral radius $\rho(A)$ of $A$. Indeed, by Theorem 8.1.31 in \cite{Horn}, we know that if $A^T$ has a positive eigenvector $\gamma\in\R^{K+1}_{++}$, then 
\begin{equation}\label{CW}
\max_{i\in [K+1]} \frac{(A^T\gamma)_i}{\gamma_i}\,=\,\rho(A)\,=\,\min_{\t\gamma\in\R^{K+1}_{++}}\, \max_{i\in [K+1]} \frac{(A^T\t\gamma)_i}{\t\gamma_i}.
\end{equation}
Therefore, in order to obtain the minimal Lipschitz constant $U$ in Lemma \ref{contract}, we choose the weights of the metric $\mu$ to be the components of $\gamma$.
A combination of Theorem \ref{Banachthm}, Lemma \ref{contract} and this observation implies the following result.
\renewcommand{\bw}[1]{(w,u)}
\renewcommand{\bv}[1]{(\t w,\t u)}	
\renewcommand{\bu}[1]{(w^*,u^*)}
\renewcommand{\bx}[2]{(w^{#1}_{#2},u^{#1}_{#2})}
\begin{thm}\label{maincor}
Let $\Phi\in C^1(V,\R)\cap C^2(B_{++},\R)$ with $\grad \Phi(S_{+})\subset V_{++}$. Let $G^{\Phi}\colon B_{++}\to B_{++}$ be defined as in \eqref{defG}. Suppose that there exists a matrix $A\in\R^{(K+1)\times (K+1)}_+$ such that $G^{\Phi}$ and $A$ satisfies the assumptions of Lemma \ref{contract} and $A^T$ has a positive eigenvector $\gamma\in\R^{K+1}_{++}$. If $\rho(A)<1$, then $\Phi$ has a unique critical point $(w^*,u^*)$ in $S_{++}$ which is the global maximum of the optimization problem \eqref{eq:optipb}. Moreover, the sequence $\big(\bx{k}{}\big)_k$ defined for any $\bx{0}{}\in S_{++}$ as $\bx{k+1}{}=G^{\Phi}\bx{k}{}$, $k\in\N$, satisfies $\lim_{k\to\infty}\bx{k}{}=(w^*,u^*)$ and
\begin{equation*}
 \norm{(w^k,u^k)-(w^*,u^*)}_{\infty}\leq \rho(A)^k\,\bigg(\frac{\mu\big(\bx{1}{},\bx{0}{}\big)}{\big(1-\rho(A)\big)\min\!\big\{\frac{\gamma_{K+1}}{\rho_u},{\min_{t\in[K]}}\frac{\gamma_t}{\rho_w}\big\}}\bigg)\qquad \forall k\in\N,
\end{equation*}
where the weights in the definition of $\mu$ are the entries of  $\gamma$.
\end{thm}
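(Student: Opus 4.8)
The idea is to combine Lemmas \ref{le:global-interior}, \ref{le:criticalfixed} and \ref{contract} with the Banach fixed point theorem (Theorem \ref{Banachthm}) applied to $G^{\Phi}$ on the complete metric space $(B_{++},\mu)$, choosing the weights of $\mu$ so that the Lipschitz constant of $G^{\Phi}$ becomes exactly $\rho(A)$, and then to translate the resulting $\mu$-convergence into the $\norm{\cdot}_\infty$-norm. Concretely, I would take $\gamma_1,\dots,\gamma_{K+1}$ in the definition of $\mu$ to be the components of the positive eigenvector of $A^T$. Since $A\geq 0$ and $A^T$ admits a positive eigenvector $\gamma$, the corresponding eigenvalue is the spectral radius, i.e. $A^T\gamma=\rho(A)\gamma$; this is precisely the Collatz--Wielandt identity \eqref{CW}, and it makes the constant $U=\max_{k}(A^T\gamma)_k/\gamma_k$ of Lemma \ref{contract} equal to $\rho(A)$. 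Because $G^{\Phi}$ maps $B_{++}$ into $B_{++}$, the gradient $\grad\Phi$ is componentwise positive on $B_{++}$, and since $\Phi\in C^2(B_{++},\R)$ the map $G^{\Phi}$ lies in $C^1(B_{++},V_{++})$; together with the standing assumption that $G^{\Phi}$ and $A$ satisfy the hypotheses of Lemma \ref{contract}, we conclude that $G^{\Phi}$ is $\rho(A)$-Lipschitz on $(B_{++},\mu)$.

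Since $\rho(A)<1$ and $(B_{++},\mu)$ is complete (shown above), Theorem \ref{Banachthm} yields a unique fixed point $(w^*,u^*)$ of $G^{\Phi}$ in $B_{++}$; because every value of $G^{\Phi}$ has $p_w$- and $p_u$-norms equal to $\rho_w$ and $\rho_u$ and strictly positive entries, in fact $(w^*,u^*)\in S_{++}$. By Lemma \ref{le:criticalfixed}, a point of $S_{++}$ is a critical point of $\Phi$ on $S_{++}$ if and only if it is a fixed point of $G^{\Phi}$, so $(w^*,u^*)$ is the unique critical point of $\Phi$ in $S_{++}$. On the other hand, $\Phi$ is continuous on the compact set $S_+$ and hence attains its global maximum there, and by Lemma \ref{le:global-interior} (whose hypothesis $\grad\Phi(S_+)\subset V_{++}$ holds by assumption) this maximum is attained in the relatively open set $S_{++}$; being a maximizer there it is a critical point of $\Phi$ on $S_{++}$, so it must coincide with $(w^*,u^*)$. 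This proves the first claim, and the iteration $(w^{k+1},u^{k+1})=G^{\Phi}(w^k,u^k)$ with $(w^0,u^0)\in S_{++}\subset B_{++}$ converges to $(w^*,u^*)$ with $\mu\big((w^k,u^k),(w^*,u^*)\big)\leq \frac{\rho(A)^k}{1-\rho(A)}\,\mu\big((w^1,u^1),(w^0,u^0)\big)$ by Theorem \ref{Banachthm}.

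It remains to pass from $\mu$ to the $\infty$-norm, which is the only step requiring a genuine computation. I would use the elementary inequality $|a-b|\leq\max\{a,b\}\,|\ln a-\ln b|$ for $a,b>0$ (apply the mean value theorem to $\exp$), together with the fact that for any point of $S_{++}$ each entry of $w_i$ is $\leq\rho_w$ and each entry of $u$ is $\leq\rho_u$, since $\norm{\cdot}_\infty\leq\norm{\cdot}_{p}$ and the $p$-norms are fixed on $S_{++}$. This gives $\norm{w_i^k-w_i^*}_\infty\leq\rho_w\norm{\ln w_i^k-\ln w_i^*}_\infty$ and $\norm{u^k-u^*}_\infty\leq\rho_u\norm{\ln u^k-\ln u^*}_\infty$, so each block contributes to $\mu\big((w^k,u^k),(w^*,u^*)\big)$ a term dominating the corresponding $\infty$-distance up to the factor $\rho_w/\gamma_i$ or $\rho_u/\gamma_{K+1}$; taking the maximum over the $K+1$ blocks yields $\norm{(w^k,u^k)-(w^*,u^*)}_\infty\leq \mu\big((w^k,u^k),(w^*,u^*)\big)/\min\{\gamma_{K+1}/\rho_u,\ \min_{t\in[K]}\gamma_t/\rho_w\}$, and combining with the Banach rate gives the stated bound (in particular $\mu$-convergence implies $\norm{\cdot}_\infty$-convergence). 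The main points to be careful about are that Banach must be applied on $B_{++}$ rather than on the non-complete set $S_{++}$, using that $G^{\Phi}$ always lands in $S_{++}$ to locate the fixed point, and the bookkeeping of constants in the metric change.
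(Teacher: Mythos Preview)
Your proposal is correct and follows essentially the same route as the paper: choose the weights of $\mu$ to be the Perron eigenvector $\gamma$ of $A^T$ so that the Lipschitz constant in Lemma~\ref{contract} equals $\rho(A)$, apply Banach on the complete space $(B_{++},\mu)$, combine with Lemmas~\ref{le:global-interior} and \ref{le:criticalfixed} to identify the unique fixed point with the global maximizer, and then convert the $\mu$-rate into an $\norm{\cdot}_\infty$-rate via the mean value inequality $|a-b|\leq\max\{a,b\}\,|\ln a-\ln b|$ together with the bounds $|w_{i,j}|\leq\rho_w$, $|u_{ab}|\leq\rho_u$ on $S_{++}$. Your remark that Banach must be run on $B_{++}$ (with the fixed point then located in $S_{++}$ because $G^{\Phi}(B_{++})\subset S_{++}$) is exactly the subtlety the paper handles as well.
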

\ifpaper
\begin{proof}
As $\gamma$ is a positive eigenvector of $A^T$, by \eqref{CW}, we know that $A^T\gamma = \rho(A)\gamma$. It follows from Lemma \ref{contract} that $\mu(G^{\Phi}\bw{},G^{\Phi}\bv{}\big)<\rho(A)\,\mu\big(\bw{},\bv{}\big)$ for every $\bw{},\bv{}\in B_{++}$, i.e. $G^{\Phi}$ is a strict contraction on the complete metric space $(B_{++},\mu)$ and by the Banach fixed point theorem \ref{Banachthm} we know that $G^{\Phi}$ has a unique fixed point $\bu{}$ in $S_{++}$. From Lemma \ref{le:global-interior} we know that $\Phi$ attains its global maximum in $S_{++}$ and, by Lemma \ref{le:criticalfixed}, this maximum is a fixed point of $G^\Phi$. Hence, $\bu{}$ is the unique global maximum of $\Phi$ in $S_{++}$. Finally, Theorem \ref{Banachthm} implies that 
\begin{equation*}\label{Banach_conv_eq}
\mu\big(\bx{k}{},\bu{}\big)\leq \frac{\rho(A)^k}{1-\rho(A)}\mu\big(\bx{1}{},\bx{0}{}\big)\qquad \forall k\in\N.
\end{equation*}
The mean value theorem implies that for every $r\in\R$, we have
\begin{equation*}
|e^s-e^t| \leq |s-t|\max_{\xi\in (-\infty,r]}e^{\xi} =e^{r}|s-t|\qquad \forall s,t\in (-\infty,r].
\end{equation*}
In particular, we have 
\begin{equation*}\ln(w^k_{a,b}), \ln(w^*_{a,b})\in (-\infty,\ln(\rho_w)]\andd\ln(u_{st}^k),\ln(u_{st}^*)\in (-\infty,\ln(\rho_u)].\end{equation*} 
It follows that
\begin{align*}
\mu\big(\bx{k}{},\bu{}\big)&=\sum_{t=1}^K \gamma_t\norm{\ln(w^k_t)-\ln(w^*_t)}_{\infty}+\gamma_{K+1}\norm{\ln(u^k)-\ln(u^*)}_{\infty}\\
&\geq \sum_{t=1}^K \frac{\gamma_t}{\rho_w}\norm{w^k_t-w^*_t}_{\infty}+\frac{\gamma_{K+1}}{\rho_u}\norm{u^k-u^*}_{\infty}\\
&\geq \max\!\Big\{\max_{t\in[K]}\,\frac{\gamma_t}{\rho_w}\norm{w^k_t-w^*_t}_{\infty},\,\frac{\gamma_{K+1}}{\rho_u}\norm{u^k-u^*}_{\infty}\Big\}\\ 
&\geq \min\!\Big\{\frac{\gamma_{K+1}}{\rho_u},\min_{t\in[K]}\frac{\gamma_t}{\rho_w}\Big\}\norm{(w^k,u^k)-(w^*,u^*)}_{\infty}
\end{align*}
and thus 
\begin{align*}
\norm{(w^k,u^k)-(w^*,u^*)}_{\infty}&\leq \frac{\mu\big(\bx{k}{},\bu{}\big)}{\min\!\big\{\frac{\gamma_{K+1}}{\rho_u},{\min_{t\in[K]}}\frac{\gamma_t}{\rho_w}\big\}}\\
&\leq\rho(A)^k\,\bigg(\frac{\mu\big(\bx{1}{},\bx{0}{}\big)}{(1-\rho(A))\min\!\big\{\frac{\gamma_{K+1}}{\rho_u},{\min_{t\in[K]}}\frac{\gamma_t}{\rho_w}\big\}}\bigg).\qedhere
\end{align*}
\end{proof}

\fi

\section{Application to Neural Networks}\label{sec:nnet}
In the previous sections we have outlined the proof of our main result for a general objective function satisfying certain properties. The purpose of this section is to prove that the
properties hold for our optimization problem for neural networks. 

We recall our objective function from \eqref{eq:optipb}
\begin{align*}
\Phi(w,u)&=\frac{1}{n}\sum_{i=1}^n \Big[ -L\big(y_i,f(w,u)(x^i)\big) + \sum_{r=1}^K f_r(w,u)(x^i)\Big]+\epsilon\Big( \sum_{r=1}^K \sum_{l=1}^{n_1} w_{r,l}+\sum_{l=1}^{n_1}\sum_{m=1}^d u_{lm}\Big)
\end{align*}
and the function class we are considering from \eqref{eq:function-class}
\begin{equation*}
f_r(x)=f_r(w,u)(x)=\sum_{l=1}^{n_1} w_{r,l} \Big(\sum_{m=1}^d u_{lm} x_m\Big)^{\alpha_l},
\end{equation*} 

The arbitrarily small $\epsilon$ in the objective is needed to make the gradient strictly positive on the boundary of $V_+$. 
We note that the assumption $\alpha_i\geq 1$ for every $i\in[n_1]$ is crucial in the following lemma in order to guarantee that $\grad \Phi$ is well defined on $S_+$.
\begin{lem}\label{le:gradient-positive}
Let $\Phi$ be defined as in \eqref{eq:optipb}, then $\nabla \Phi(w,u)$ is strictly positive for any $(w,u) \in S_+$.
\end{lem}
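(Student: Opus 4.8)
The plan is to compute $\nabla\Phi$ componentwise and observe that, because of the last ($\epsilon$-weighted) term, strict positivity reduces to the statement that the gradient of the remaining part of $\Phi$ is \emph{nonnegative} on $S_+$.

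First I would record the partial derivatives of the network outputs. Writing $t_l(x)=\sum_{m=1}^d u_{lm}x_m$, which is nonnegative for $(w,u)\in S_+$ and $x\in\R_+^d$, one has
\[
\frac{\partial f_r(w,u)(x)}{\partial w_{sl}} = \delta_{rs}\, t_l(x)^{\alpha_l},
\qquad
\frac{\partial f_r(w,u)(x)}{\partial u_{lm}} = w_{rl}\,\alpha_l\, t_l(x)^{\alpha_l-1}\, x_m .
\]
The assumption $\alpha_l\geq 1$ enters here twice: it guarantees that $t\mapsto t^{\alpha_l}$ is continuously differentiable at $t=0$, so that $\nabla\Phi$ is well defined and continuous on \emph{all} of $S_+$ (including those $(w,u)$ for which some $t_l(x^i)$ vanishes), and it guarantees $t_l(x)^{\alpha_l-1}\geq 0$. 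Since $w,u$ and all $x^i$ are nonnegative, both partial derivatives above are $\geq 0$.

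Next I would rewrite the loss contribution using the softmax probabilities $\sigma_j(x)=e^{f_j(x)}\big/\sum_{k=1}^K e^{f_k(x)}\in(0,1]$. Since $-L(y_i,f(x^i))=f_{y_i}(x^i)-\log\sum_j e^{f_j(x^i)}$, for any coordinate $\theta\in\{w_{sl},u_{lm}\}$ one gets
\[
\frac{\partial}{\partial\theta}\Big[-L\big(y_i,f(x^i)\big)+\sum_{r=1}^K f_r(x^i)\Big]
=\frac{\partial f_{y_i}(x^i)}{\partial\theta}+\sum_{j=1}^K\big(1-\sigma_j(x^i)\big)\frac{\partial f_j(x^i)}{\partial\theta}.
\]
Each factor $1-\sigma_j(x^i)\geq 0$, and by the previous step each $\partial f_j(x^i)/\partial\theta\geq 0$, so this bracket has nonnegative partial derivatives; averaging over $i$ preserves this. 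Finally, the term $\epsilon\big(\sum_{r,l}w_{r,l}+\sum_{l,m}u_{lm}\big)$ contributes exactly $\epsilon$ to the partial derivative with respect to every coordinate $w_{sl}$ and $u_{lm}$. Summing the three contributions, every component of $\nabla\Phi(w,u)$ is at least $\epsilon>0$, which is the claim.

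There is essentially no real obstacle beyond bookkeeping; the only point that genuinely needs care is the well-definedness and continuity of the derivatives of the powers $t^{\alpha_l}$ at $t=0$, which is precisely why $\alpha_l\geq 1$ is imposed. The degenerate cases (some $x^i=0$, or some weight components equal to $0$ on the boundary of $S_+$) only make the $f$-part of the gradient smaller while keeping it nonnegative, so the $\epsilon$ term still forces strict positivity — which is exactly the reason the $\epsilon$ term was added.
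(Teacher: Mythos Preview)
Your proof is correct and follows essentially the same route as the paper: compute the partial derivatives, observe that the coefficient $\delta_{y_i r}+1-\sigma_r(x^i)$ multiplying the nonnegative quantities $\partial f_r/\partial\theta$ is nonnegative (the paper writes this directly as $\delta_{y_i a}-\sigma_a+1>0$ rather than splitting off the $\partial f_{y_i}/\partial\theta$ term), and conclude that the $\epsilon$ contribution forces strict positivity. Your extra remark about $\alpha_l\geq 1$ ensuring differentiability at $t_l(x)=0$ is exactly the point the paper highlights just before stating the lemma.
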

\ifpaper
\begin{proof}
One can compute
\begin{equation*}
\frac{\partial \Phi(w,u)}{\partial w_{a,b}} =\frac{1}{n}\sum_{i=1}^n\bigg( \delta_{y_i a} - \frac{e^{f_a(x^i)}}{\sum_{j=1}^K e^{f_j(x^i)}} + 1\bigg) \Big(\sum_{m=1}^d u_{bm} x^i_m\Big)^{\alpha_b}+\epsilon,
\end{equation*}
where $\delta_{ij}=\begin{cases} 1 & \textrm{ if } i=j,\\ 0 & \textrm{ else }.\end{cases}$ denotes the Kronecker delta, and
\begin{equation*}
\frac{\partial \Phi(w,u)}{\partial u_{ab}} = \frac{1}{n}\sum_{i=1}^n \sum_{r=1}^K \bigg( \delta_{y_ir} - \frac{e^{f_r(x^i)}}{\sum_{j=1}^K e^{f_j(x^i)}} + 1\bigg)\Big(  w_{r,a} \alpha_a \big(\sum_{m=1}^d u_{am} x^i_m\big)^{\alpha_a-1} x_b^i\Big)+\epsilon.
\end{equation*}
Note that 
\begin{equation*}
\delta_{y_i a} - \frac{e^{f_a(x_i)}}{\sum_{j=1}^K e^{f_j(x)}} + 1>0,
\end{equation*}
and thus using $(w,u) \in S_+$, we deduce that even without the additional $\epsilon$, the gradient would be strictly positive on $S_{++}$ assuming that there exists $k\in[n]$ such that $x^k\neq 0$. However,
at the boundary it can happen that the gradient is zero and thus an arbitrarily small $\epsilon$ is sufficent to guarantee that the 
gradient is strictly positive on $S_{++}$.
\end{proof}
\fi
Next, we derive the matrix $A\in\R^{(K+1)\times (K+1)}$ in order to apply Theorem \ref{maincor} to $G^{\Phi}$ with $\Phi$ defined in \eqref{eq:optipb}. As discussed in its proof, the matrix $A$ given in the following theorem has a smaller spectral radius than that of Theorem \ref{mainthm}. To express this matrix, we consider $\C^{\alpha}_{p,q}\colon\R^{n_1}_{++}\times \R_{++}\to\R_{++}$ defined for $p,q\in(1,\infty)$ and $\alpha\in \R^{n_1}_{++}$ as
\begin{equation}\label{defC}
\C^{\alpha}_{p,q}(\delta,t)=\bigg(\Big[\sum_{l\in J}(\delta_l\,t^{\alpha_l})^{\frac{p\,q}{q-\bar\alpha p}}\Big]^{1-\frac{\bar\alpha p}{q}}+\max_{j\in J^c}(\delta_j\, t^{\alpha_j})^p\bigg)^{1/p}, 
\end{equation}
where $ J=\{l\in[n_1]\mid \alpha_lp\leq q\}$, $J^c=\{l\in[n_1]\mid \alpha_lp>q\}$ and $\bar \alpha =\min_{l\in J}\alpha_l$.
\begin{thm}\label{lay1thm}
Let $\Phi$ be defined as above and $G^{\Phi}$ be as in \eqref{defG}. Set $C_w = \rho_w\,\C_{p'_w,p_u}^{\alpha}(\ones,\rho_u\rho_x)$, $C_u = \rho_w\,\C_{p'_w,p_u}^{\alpha}(\alpha,\rho_u\rho_x)$ and $\rho_x = \max_{i\in[n]}\norm{x^i}_{p_u'}$. Then $A$ and $G^{\Phi}$ satisfy all assumptions of Lemma \ref{contract} with
 \begin{equation*}
A=2\diag\big(p_w'-1,\ldots,p_w'-1,p_u'-1\big)\begin{pmatrix} Q_{w,w} & Q_{w,u} \\ Q_{u,w} & Q_{u,u}\end{pmatrix} 
\end{equation*}
where $Q_{w,w}\in\R_{++}^{K\times K}, Q_{w,u}\in\R_{++}^{K \times 1}, Q_{u,w} \in\R_{++}^{1\times K}$ and $Q_{u,u}\in \R_{++}$ are defined as
\begin{equation*}
\begin{array}{ll}
Q_{w,w} = 2C_w\ones\ones^T, & Q_{w,u} = (2C_u+\norm{\alpha}_{\infty})\ones,\\ Q_{u,w}=(2C_w+1)\ones^T, & Q_{u,u}=(2C_u+\norm{\alpha}_{\infty}-1).
\end{array}
\end{equation*}
\end{thm}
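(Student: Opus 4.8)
The plan is to verify directly the four families of inequalities required by Lemma~\ref{contract}, namely that for the mapping $G^{\Phi}$ and the candidate matrix $A$ one has $\ps{|\grad_{w_k} G^{\Phi}_{w_{i,j}}|}{w_k}\leq A_{i,k}\,G^{\Phi}_{w_{i,j}}$, and the analogous bounds with $\grad_u$, with $G^{\Phi}_{u_{ab}}$, etc. Since $G^{\Phi}$ is obtained from $\grad\Phi$ by the normalization $z\mapsto \rho\,\psi_{p'}(z)/\norm{\psi_{p'}(z)}_p$, the first reduction is to express $\grad G^{\Phi}_{w_{i,j}}$ in terms of first and second derivatives of $\Phi$. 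A short computation shows that logarithmic derivatives behave nicely: $\frac{\partial}{\partial w_{k,l}}\ln G^{\Phi}_{w_{i,j}} = (p_w'-1)\big(\frac{\partial^2_{w_{i,j},w_{k,l}}\Phi}{\partial_{w_{i,j}}\Phi} - (\text{normalization term})\big)$, and the normalization term has the same sign structure, so that after taking absolute values and pairing with $w_k$ (or $u$) one gets a factor $2(p_w'-1)$ times a bound on $\ps{|\grad \partial_{w_{i,j}}\Phi|}{\cdot}/\partial_{w_{i,j}}\Phi$. This explains the prefactor $2\,\diag(p_w'-1,\ldots,p_u'-1)$ in the statement. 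So the real work is to bound, for the explicit $\Phi$ of \eqref{eq:optipb}, the quantities $\ps{|\grad_{w_k}\partial_{w_{i,j}}\Phi|}{w_k}/\partial_{w_{i,j}}\Phi$ and the three companions, by $Q_{w,w},Q_{w,u},Q_{u,w},Q_{u,u}$ respectively.

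Next I would carry out these four bounds using the formulas for $\partial\Phi/\partial w_{a,b}$ and $\partial\Phi/\partial u_{ab}$ from the proof of Lemma~\ref{le:gradient-positive}. The key structural facts are: (i) the ``loss'' factors $\delta_{y_ir}-\text{softmax}_r+1$ lie in $(0,2]$, so they can be bounded above by $2$ and below by $0$ (or by $\epsilon$-type positivity to keep denominators controlled); (ii) the second derivatives of $\Phi$ decompose into a term coming from differentiating the monomial $(\sum_m u_{lm}x^i_m)^{\alpha_l}$ and a term coming from differentiating the softmax, each of which is again a sum over data points of nonnegative contributions controlled by the same monomials. Pairing $\ps{|\cdot|}{w_k}$ or $\ps{|\cdot|}{u}$ with these derivatives reproduces, up to the loss bound $2$, either $\sum_l w_{rl}(\rho_u\rho_x)^{\alpha_l}$-type sums or $\alpha$-weighted versions thereof, divided by $\partial_{w_{i,j}}\Phi\geq (\sum_m u_{i,m}x^k_m)^{\alpha_i}$ for the ``best'' data point $x^k$. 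Here is where the function $\C^\alpha_{p,q}$ enters: the ratio $\big(\sum_l w_{rl}(\sum_m u_{lm}x^i_m)^{\alpha_l}\big)/\big(\sum_m u_{i,m}x^k_m\big)^{\alpha_i}$ has to be bounded uniformly over $(w,u)\in B_{++}$ and over the data, and the sharp such bound — obtained by a H\"older/maximization argument splitting the index set according to whether $\alpha_l p\le q$ — is exactly $\rho_w\,\C^{\alpha}_{p_w',p_u}(\ones,\rho_u\rho_x)=C_w$, with the $\alpha$-weighted variant giving $C_u$. I would prove this auxiliary estimate as a separate lemma (it is likely stated in the supplementary material) and then simply invoke it.

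After establishing the four scalar bounds $2(2C_w)$, $2(2C_u+\norm\alpha_\infty)$, $2(2C_w+1)$, $2(2C_u+\norm\alpha_\infty-1)$ for the logarithmic-derivative ratios (the asymmetry $+1$ vs.\ $-1$ coming from the extra ``$\sum_r f_r$'' term, which contributes to the $u$-derivative of $\partial_{w}\Phi$ but whose $u$-derivative of $\partial_u\Phi$ loses one power of $\alpha$), the matrix $A$ falls out by reading off $A_{i,k}=2(p_w'-1)\cdot 2C_w$ for $i,k\in[K]$, $A_{i,K+1}=2(p_w'-1)(2C_u+\norm\alpha_\infty)$, $A_{K+1,k}=2(p_u'-1)(2C_w+1)$, $A_{K+1,K+1}=2(p_u'-1)(2C_u+\norm\alpha_\infty-1)$, which is precisely $2\,\diag(p_w'-1,\ldots,p_u'-1)\begin{pmatrix}Q_{w,w}&Q_{w,u}\\ Q_{u,w}&Q_{u,u}\end{pmatrix}$ as claimed; positivity of all entries (so $A\in\R^{(K+1)\times(K+1)}_{++}$) follows since $C_w,C_u>0$ and $\norm\alpha_\infty\ge 1$. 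I would also remark that the coarser matrix of Theorem~\ref{mainthm} is recovered by the crude bound $\C^\alpha_{p'_w,p_u}(\ones,\rho_u\rho_x)\le \sum_l(\rho_u\rho_x)^{\alpha_l}$ (drop the H\"older refinement, bound the $J$-sum and the $\max$ over $J^c$ both by the full sum), explaining why $\rho(A)$ here is no larger.

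The main obstacle I anticipate is the second step: correctly differentiating the softmax contribution and tracking the many index-dependent terms so that every second-partial-derivative summand is genuinely dominated by a nonnegative multiple of one of the four ``template'' sums, without sign cancellations being needed — in particular verifying that the $-1$ in $Q_{u,u}$ is legitimate (i.e.\ that the worst case really does lose a power of $\alpha$) and that all denominators $\partial_{w_{i,j}}\Phi$, $\partial_{u_{ab}}\Phi$ stay uniformly bounded below by the monomial evaluated at the norm-maximizing data point. The clean packaging of the monomial-ratio bound into the function $\C^\alpha_{p,q}$ via the H\"older split on $\{l:\alpha_l p\le q\}$ versus its complement is the other delicate point, but it is self-contained and can be isolated as a lemma.
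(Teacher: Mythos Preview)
Your high-level plan matches the paper's: reduce via the log-derivative computation to bounding $\sum_s g_s\big|\partial^2\Phi/\partial g_s\partial h_a\big|\big/(\partial\Phi/\partial h_a)$, pick up the factor $2(p_h'-1)$ from the normalization, then split $\partial^2\Phi$ into the softmax-Hessian piece and the piece coming from $\partial^2 f_r$, and package the monomial estimates into the auxiliary function $\C^\alpha_{p,q}$. That is exactly the paper's three-step argument (Proposition~\ref{boundthm}, Lemma~\ref{boundlema}, Lemma~\ref{bound1}).

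However, your description of the central estimate is misconceived in a way that would not close. You propose to bound the ratio by lower-bounding the denominator $\partial_{w_{i,j}}\Phi$ by a single monomial $(\sum_m u_{jm}x^k_m)^{\alpha_j}$ at one ``best'' data point and upper-bounding the numerator separately. This fails: the numerator is a sum over \emph{all} data points, so crushing the denominator to a single summand costs a factor of~$n$; worse, the ratio you display, $\big(\sum_l w_{rl}(\sum_m u_{lm}x_m)^{\alpha_l}\big)\big/(\sum_m u_{jm}x_m)^{\alpha_j}$, is unbounded on $B_{++}$ (let the $j$-th row of $u$ tend to zero). The paper never lower-bounds $\partial\Phi$. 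Instead it shows, \emph{sample by sample}, that $\sum_s g_s\,\partial f_l/\partial g_s\big|_{x^{i'}}\le C_g$ --- so $C_w,C_u$ are absolute bounds on $\sum_s g_s\,\partial f_l/\partial g_s$, not ratio bounds --- and that the softmax-Hessian factors reorganize back into the very expression $\sum_r(\delta_{y_{i'}r}-\text{softmax}_r+1)\,\partial f_r/\partial h_a\big|_{x^{i'}}$ that appears in $\partial\Phi/\partial h_a$. Summing over $i'$ then gives $\sum_s g_s|\partial^2\Phi/\partial g_s\partial h_a|\le (2C_g+M_{h,g})\,\partial\Phi/\partial h_a$ directly. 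Finally, your attribution of the ``$+1$ vs.\ $-1$'' asymmetry to the $\sum_r f_r$ term is incorrect: these are the constants $M_{u,w}=1$ and $M_{u,u}=\|\alpha\|_\infty-1$, which come purely from the Euler-type identities $\sum_t w_{s,t}\,\partial^2 f_r/\partial w_{s,t}\partial u_{ab}=\delta_{rs}\,\partial f_r/\partial u_{ab}$ and $\sum_{a,b}u_{ab}\,\partial^2 f_r/\partial u_{ab}\partial u_{st}=(\alpha_s-1)\,\partial f_r/\partial u_{st}$, i.e.\ from the homogeneity degrees of $f_r$ in $w$ and $u$, not from the objective's extra linear term.
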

In the supplementary material, we prove that $\C^{\alpha}_{p,q}(\delta,t)\leq \sum_{l=1}^{n_1}\delta_lt^{\alpha_l}$ which yields the weaker bounds $\xi_1,\xi_2$ given in Theorem \ref{mainthm}. In particular, this observation combined with Theorems \ref{maincor} and \ref{lay1thm} implies Theorem \ref{mainthm}.
\ifpaper
\begin{proof}
We omit in the following the summation intervals as this is clear from the definition of the variables e.g. if $w \in \R^{K \times n_1}$ then
$\sum_{a,b} w_{a,b} = \sum_{a=1}^K \sum_{b=1}^{n_1} w_{a,b}$. We split the proof in three steps so that we can reuse some of them in the proof of the similar theorem for two hidden layers.
\begin{prop}\label{boundthm}
Suppose 
 there exist $M_{i,j},C_i>0$, $i,j\in\{w,u\}$ such that for every $x\in\{x^1,\ldots,x^n\}$ and $r,s\in [K]$, we have
\begin{equation*}
\begin{array}{l@{\hspace{-0.001mm}}cl@{\hspace{-0.001mm}}l}
\displaystyle \sum_{t} w_{s,t} \frac{\partial f_r(x)}{\partial w_{s,t}} \leq C_{w}, & \qquad& \displaystyle \sum_{a,b} u_{ab} \frac{\partial f_r(x)}{\partial u_{ab}} \leq C_{u},\\
\displaystyle\sum_{t} w_{s,t} \frac{\partial^2 f_r(x)}{\partial w_{s,t} \partial w_{a,b}} \leq M_{w,w}\frac{\partial f_r(x)}{\partial w_{a,b}}, &\qquad& \displaystyle\sum_{a,b} u_{ab} \frac{\partial^2 f_r(x)}{\partial u_{ab}\partial w_{s,t}} \leq M_{w,u}\frac{\partial f(x)}{\partial w_{s,t}},\\
\displaystyle\sum_{t} w_{s,t} \frac{\partial^2 f_r(x)}{\partial w_{s,t} \partial u_{ab}}\leq M_{u,w}\frac{\partial f_r(x)}{\partial u_{ab}},
&\qquad& \displaystyle\sum_{a,b} u_{ab}  \frac{\partial^2 f_r(x)}{ \partial u_{ab}\partial u_{st} }\leq M_{u,u}\frac{\partial f(x)}{\partial u_{st}}.
\end{array}
\end{equation*}
Then $A$ and $F=G^{\Phi}$ satisfy all assumptions of Lemma \ref{contract} for
 \begin{equation*}
A_{} =2\diag\big(p_w'-1,\ldots,p_w'-1,p_u'-1\big)\begin{pmatrix} Q_{w,w} & Q_{w,u} \\ Q_{u,w} & Q_{u,u}\end{pmatrix}\label{buildA}
\end{equation*}
where $Q_{w,w}\in\R_{++}^{K\times K}, Q_{w,u}\in\R_{++}^{K \times 1}, Q_{u,w} \in\R_{++}^{1\times K}, Q_{u,u}\in \R_{++}$ are constant matrices given by
\begin{equation*}
\begin{array}{ll}
Q_{w,w} = (2C_w+M_{w,w})\ones\ones^T, & Q_{w,u} = (2C_u+M_{w,u})\ones,\\ Q_{u,w}=(2C_w+M_{u,w})\ones^T, & Q_{u,u}=(2C_u+M_{u,u}).
\end{array}
\end{equation*}
\end{prop}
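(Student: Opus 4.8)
The goal is to verify the two hypotheses Lemma~\ref{contract} imposes on $(F,A)$ with $F=G^{\Phi}$: that $F\in C^{1}(B_{++},V_{++})$, and that the four weighted gradient inequalities hold with the displayed $A$. The first is quick: by Lemma~\ref{le:gradient-positive} --- indeed already because of the term $\epsilon\big(\sum_{r,l}w_{r,l}+\sum_{l,m}u_{lm}\big)$ --- every component of $\grad\Phi$ is $\geq\epsilon>0$ on all of $V_{+}$, and since $\alpha_{i}\geq 1$ for every $i$ one has $\Phi\in C^{2}(B_{++})$; hence each block of $G^{\Phi}$ is a composition of the $C^{1}$ map $\grad\Phi$ with $t\mapsto t^{\,p'-1}$ and $v\mapsto\rho\,v/\norm{v}_{p}$, both $C^{1}$ on the strictly positive orthant, so $G^{\Phi}$ is $C^{1}$ on $B_{++}$; and its image consists of strictly positive vectors carrying the prescribed $p_{w}$- and $p_{u}$-norms, hence lies in $S_{++}\subset V_{++}$, so $F=G^{\Phi}\in C^{1}(B_{++},V_{++})$.

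For the inequalities I would first reduce them, by logarithmic differentiation, to Hessian estimates on $\Phi$. Using $\psi_{p_{w}'}(t)=t^{\,p_{w}'-1}$ for $t>0$ and the identity $(p_{w}'-1)p_{w}=p_{w}'$, a short computation gives for every coordinate $\eta$
\[
\partial_{\eta}\ln F_{w_{i,j}}=(p_{w}'-1)\Big[\tfrac{\partial^{2}_{\eta\,w_{i,j}}\Phi}{\partial_{w_{i,j}}\Phi}-\sum_{m=1}^{n_{1}}\lambda_{m}\,\tfrac{\partial^{2}_{\eta\,w_{i,m}}\Phi}{\partial_{w_{i,m}}\Phi}\Big],\qquad \lambda_{m}=\tfrac{(\partial_{w_{i,m}}\Phi)^{p_{w}'}}{\sum_{m'}(\partial_{w_{i,m'}}\Phi)^{p_{w}'}},
\]
and the analogue for $F_{u_{ab}}$ with $p_{u}$. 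Since $\grad F_{w_{i,j}}=F_{w_{i,j}}\,\grad\ln F_{w_{i,j}}$, since $(\lambda_{m})_{m}$ is a probability vector, and by the triangle inequality, proving the four inequalities of Lemma~\ref{contract} reduces to showing that for all $i,k\in[K]$, $j,a\in[n_{1}]$, $b\in[d]$ and $(w,u)\in B_{++}$,
\[
\ps{|\grad_{w_{k}}\partial_{w_{i,j}}\Phi|}{w_{k}}\leq(2C_{w}+M_{w,w})\,\partial_{w_{i,j}}\Phi,\qquad
\ps{|\grad_{u}\partial_{w_{i,j}}\Phi|}{u}\leq(2C_{u}+M_{w,u})\,\partial_{w_{i,j}}\Phi,
\]
and likewise $\ps{|\grad_{w_{k}}\partial_{u_{ab}}\Phi|}{w_{k}}\leq(2C_{w}+M_{u,w})\,\partial_{u_{ab}}\Phi$ and $\ps{|\grad_{u}\partial_{u_{ab}}\Phi|}{u}\leq(2C_{u}+M_{u,u})\,\partial_{u_{ab}}\Phi$; multiplying the first pair by $2(p_{w}'-1)$ and the second pair by $2(p_{u}'-1)$ then reproduces exactly the four blocks $Q_{w,w},Q_{w,u},Q_{u,w},Q_{u,u}$, and all entries of $A$ are $>0$ because $C_{\bullet},M_{\bullet\bullet}>0$.

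To establish these Hessian estimates, write $\Phi=\tfrac1n\sum_{p}h\big(y_{p},f(x^{p})\big)+\epsilon\big(\sum_{r,l}w_{r,l}+\sum_{l,m}u_{lm}\big)$ with $h(y,f)=f_{y}-\log\sum_{j}e^{f_{j}}+\sum_{r}f_{r}$ and $\sigma_{r}=e^{f_{r}}/\sum_{j}e^{f_{j}}$. The facts driving the argument are $\partial_{f_{r}}h=\delta_{yr}-\sigma_{r}+1\in(0,2]$ together with
\[
|\partial^{2}_{f_{r}f_{s}}h|\;\leq\;1-\sigma_{r}\;\leq\;\partial_{f_{r}}h
\qquad\text{and}\qquad
\sum_{s}|\partial^{2}_{f_{r}f_{s}}h|\;=\;2\sigma_{r}(1-\sigma_{r})\;\leq\;2\,\partial_{f_{r}}h,
\]
the last of which --- combined with nonnegativity of the data, which makes every first and second partial of $f_{r}$ in the network parameters nonnegative on $B_{++}$, so that no sign cancellation occurs --- is exactly what turns an additive bound into a multiplicative one of the form ``$\leq A\cdot F$''. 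By the chain rule (the linear $\epsilon$-term has no second derivatives), for coordinates $\theta,\theta'$ and with all $f_{r}$ evaluated at $x^{p}$,
\[
\partial^{2}_{\theta\theta'}\Phi=\tfrac1n\sum_{p}\Big(\sum_{r,s}\partial^{2}_{f_{r}f_{s}}h\,\partial_{\theta}f_{r}\,\partial_{\theta'}f_{s}+\sum_{r}\partial_{f_{r}}h\,\partial^{2}_{\theta\theta'}f_{r}\Big),\qquad
\partial_{\theta'}\Phi\;\geq\;\tfrac1n\sum_{p}\sum_{r}\partial_{f_{r}}h\,\partial_{\theta'}f_{r}.
\]
For the $(w_{i},w_{k})$ block, $\partial_{w_{k,l}}f_{r}$ vanishes unless $r=k$ and $\partial_{w_{i,j}}f_{s}$ vanishes unless $s=i$, so the first sum is the single term $\partial^{2}_{f_{k}f_{i}}h\,\partial_{w_{k,l}}f_{k}\,\partial_{w_{i,j}}f_{i}$; multiplying by $w_{k,l}$, summing over $l$, and using $\ps{\grad_{w_{k}}f_{k}}{w_{k}}\leq C_{w}$, $|\partial^{2}_{f_{k}f_{i}}h|\leq\partial_{f_{i}}h$, together with $\ps{\grad_{w_{k}}\partial_{w_{i,j}}f_{i}}{w_{k}}\leq M_{w,w}\,\partial_{w_{i,j}}f_{i}$ on the second sum, bounds it by $(C_{w}+M_{w,w})\,\partial_{w_{i,j}}\Phi\leq(2C_{w}+M_{w,w})\,\partial_{w_{i,j}}\Phi$. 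For the blocks containing a $\grad_{u}$, a $u$-derivative hits every $f_{s}$, so the first sum genuinely runs over all indices; there one uses $\sum_{s}|\partial^{2}_{f_{r}f_{s}}h|\leq2\,\partial_{f_{r}}h$ (here the factor $2$ on the $C$'s genuinely enters), the hypotheses $\ps{\grad_{u}f_{s}}{u}\leq C_{u}$ together with the $M_{w,u},M_{u,w},M_{u,u}$ bounds on the corresponding second-derivative terms, and the lower bound $\partial_{u_{ab}}\Phi\geq\tfrac1n\sum_{p}\sum_{r}\partial_{f_{r}}h\,\partial_{u_{ab}}f_{r}$; the $(u,w_{k})$ and $(u,u)$ computations are symmetric to the two above. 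Assembling the four resulting inequalities proves the proposition.

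The analytical content is light; the real work --- and the main place for slips --- is bookkeeping: carrying the triangle-inequality / convex-combination reduction cleanly through the normalizations built into $G^{\Phi}$, keeping track of which outputs $f_{r}$ respond to which parameter (only the ``diagonal'' output for a $w_{i}$-coordinate, all outputs for a $u$-coordinate) so that the right $C_{\bullet}$ and $M_{\bullet\bullet}$ enter each block, and verifying the softmax estimate $\sum_{s}|\partial^{2}_{f_{r}f_{s}}h|\leq2\,\partial_{f_{r}}h$, which is the single place where the multiplicative structure needed for the subsequent Thompson-metric contraction (Lemma~\ref{contract} and Theorem~\ref{maincor}) is produced.
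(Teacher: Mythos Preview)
Your proof is correct and follows essentially the same route as the paper: the logarithmic-differentiation formula for $\partial_\eta\ln F_{w_{i,j}}$ is exactly the paper's computation of $\partial G^{\Phi}_{h_a}/\partial g_s$ (written in abstract variables $g,h\in\{w_1,\ldots,w_K,u\}$), and the reduction via the probability weights $(\lambda_m)_m$ to the Hessian bound $\sum_s g_s|\partial^2_{g_s h_a}\Phi|\leq R_{h,g}\,\partial_{h_a}\Phi$ with $R_{h,g}=2C_g+M_{h,g}$, followed by the softmax estimate $\sum_q|\partial^2_{f_qf_r}L|=2\sigma_r(1-\sigma_r)\leq 2\,\partial_{f_r}h$, matches the paper line for line. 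The only cosmetic difference is that you split into the four blocks and exploit $\partial_{w_{k,l}}f_r=\delta_{rk}(\cdots)$, which in the $(w_i,w_k)$ and $(u,w_k)$ blocks actually yields the sharper constant $C_w+M_{\bullet,w}$ before you discard it to reach $2C_w+M_{\bullet,w}$; the paper instead treats all blocks uniformly and lands directly on $2C_g+M_{h,g}$.
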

\begin{proof}
In order to save space we make the proof w.r.t. to abstract variables $g,h$, that is $g,h \in\{w_1,\ldots,w_{K},u\}$. First of all, note that
we have
\begin{align*}
\frac{\partial G^{\Phi}_{h_a}}{\partial g_s}&=\frac{\partial}{\partial g_s}\frac{\rho_h\Big(\frac{\partial \Phi}{\partial h_a}\Big)^{p_h'-1}}{\norm{\psi_{p_h'}\big(\grad_h\Phi\big)}_{p_h}}=\frac{\partial}{\partial g_s}\frac{\rho_h\Big(\frac{\partial \Phi}{\partial h_a}\Big)^{p_h'-1}}{\norm{\grad_h\Phi}_{p_h'}^{p_h'-1}}\\ &= (p_h'-1)\rho_h\Bigg[\frac{\Big(\frac{\partial \Phi}{\partial h_a}\Big)^{p_h'-2}\frac{\partial^2 \Phi}{\partial g_s\partial h_a}}{\norm{\grad_h\Phi}_{p_h'}^{p_h'-1}}-\frac{\Big(\frac{\partial \Phi}{\partial h_a}\Big)^{p_h'-1}\norm{\grad_h\Phi}_{p_h'}^{-1}\sum_c\Big(\frac{\partial \Phi}{\partial h_c}\Big)^{p_h'-1} \frac{\partial^2 \Phi}{\partial g_s\partial h_c}}{\norm{\nabla_{h} \Phi}_{p_h'}^{2p_h'-2}} \Bigg]\\ 
&=
\Bigg[\frac{ \frac{\partial^2 \Phi}{\partial g_s \partial h_a}}{\frac{\partial \Phi}{\partial h_a}}-\frac{\sum_{c} \Big(\frac{\partial \Phi}{\partial h_c}\Big)^{p_h'-1} \frac{\partial^2 \Phi}{\partial g_s\partial h_c}}{\norm{\nabla_{h} \Phi}_{p_h'}^{p_h'}}\Bigg]\!(p_h'-1) G^{\Phi}_{h_a},
\end{align*}
where $\rho_h=\rho_w$ if $h\in\{w_1,\ldots,w_K\}$ and $\rho=\rho_u$ if $h=u$. Thus,
\begin{equation*}
\sum_{s} g_s \Big|\frac{\partial G^{\Phi}_{h_a}}{\partial g_s}\Big|=\left[\sum_{s} g_s \left|\frac{ \frac{\partial^2 \Phi}{\partial g_s \partial h_a}}{\frac{\partial \Phi}{\partial h_a}} -\frac{\sum_{c} \Big(\frac{\partial \Phi}{\partial h_c}\Big)^{p_h'-1} \frac{\partial^2 \Phi}{\partial g_s\partial h_c}}{\norm{\nabla_{h} \Phi}_{p_h'}^{p_h'}}\right|\,\right]\!\!(p_h'-1)\, G^\Phi_{h_a}.
\end{equation*}
Now, suppose that there exists $R_{h,g}>0$ such that
\begin{equation}\label{boundR}
\sum_{s} g_{s} \left|\frac{\partial^2 \Phi}{\partial g_{s} \partial h_{a}}\right|\leq R_{h,g}\frac{\partial \Phi}{\partial h_a},
\end{equation}
then we get
\begin{align*}
 A_{h,g}&=(p_h'-1)  \max_{a} \sum_{s} g_{s}\left|\frac{ \frac{\partial^2 \Phi}{\partial g_{s}\partial h_{a}}}{\frac{\partial \Phi}{\partial h_{a}}} - \frac{\sum_{c}\Big(\frac{\partial \Phi}{\partial h_{c}}\Big)^{p_h'-1} \frac{\partial^2 \Phi}{\partial g_{s}\partial h_{c}}}{\norm{\nabla_h \Phi}_{p_h'}^{p_h'}}\right|  \\
&\leq (p'-1)  \max_{a} \sum_{s} g_{s}\left|\frac{ \frac{\partial^2 \Phi}{\partial g_{s}\partial h_{a}}}{\frac{\partial \Phi}{\partial h_{a}}} \right|+(p_h'-1)\sum_{s} g_{s}\left| \frac{\sum_{c}\Big(\frac{\partial \Phi}{\partial h_{c}}\Big)^{p_h'-1} \frac{\partial^2 \Phi}{\partial g_{s}\partial h_{c}}}{\norm{\nabla_h \Phi}_{p_h'}^{p_h'}}\right|  \\
&\leq (p_h'-1)  \max_{a} \sum_{s} g_{s}\left|\frac{ \frac{\partial^2 \Phi}{\partial g_{s}\partial h_{a}}}{\frac{\partial \Phi}{\partial h_{a}}} \right|+(p_h'-1)\frac{\sum_{c}\Big(\frac{\partial \Phi}{\partial h_{c}}\Big)^{p_h'-1} \sum_{s} g_{s}\left| \frac{\partial^2 \Phi}{\partial g_{s}\partial h_{c}}\right|}{\norm{\nabla_h \Phi}_{p_h'}^{p_h'}}  \\
&\leq (p_h'-1)R_{h,g}+(p_h'-1)R_{h,g}\frac{\sum_{c}\Big(\frac{\partial \Phi}{\partial h_{c}}\Big)^{p_h'}  }{\norm{\nabla_h \Phi}_{p_h'}^{p_h'}}  \\
 &=2 (p_h'-1)R_{h,g}.
\end{align*}
It follows that, if we define $A_{h,g}=2(p'_h-1)R_{h,g}$ for all $g,h \in\{w_1,\ldots,w_{K},u\}$, then $A$ and $G^{\Phi}$ satisfy all assumptions of Lemma \ref{contract}. In order to conclude the proof, we show that $R_{h,g}=2C_g+M_{h,g}$.
We compute the derivatives of the cross-entropy loss
\begin{align*} 
-\frac{\partial L}{\partial f_r(x)} &= \delta_{yr} - \frac{e^{f_r(x)}}{\sum_{j=1}^K e^{f_j(x)}},\\
-\frac{\partial^2 L}{\partial f_q(x) \partial f_r(x)} &= - \delta_{qr}\frac{e^{f_r(x)}}{\sum_{j=1}^K e^{f_j(x)}} + \frac{e^{f_r(x)}e^{f_q(x)}}{\big(\sum_{j=1}^K e^{f_j(x)}\big)^2},
\end{align*}
We get for the derivatives of the objective with respect to the abstract variables $g,h$
\begin{align}
\frac{\partial \Phi}{\partial g_{s} }&= \frac{1}{n}\sum_{i=1}^n \sum_{r=1}^K \Big(-\frac{\partial L}{\partial f_r}\Big|_{f(x^i)} + 1\Big) \frac{\partial f_r}{\partial g_s}\Big|_{x^i} + \epsilon
\end{align}
and
\begin{align*}
\frac{\partial \Phi}{\partial g_s \partial h_a}
=  \frac{1}{n}\sum_{i=1}^n \sum_{q,r=1}^K \Big(-\frac{\partial^2 L}{\partial f_q \partial f_r}\Big|_{f(x^i)} \Big) \frac{\partial f_r}{\partial h_a}\Big|_{x^i} \frac{\partial f_q}{\partial g_s}\Big|_{x^i} + \frac{1}{n}\sum_{i=1}^n \sum_{r=1}^K\Big(-\frac{\partial L}{\partial f_r}\Big|_{f(x^i)} + 1\Big)  \frac{\partial^2 f_r}{\partial g_s \partial h_a}\Big|_{x^i}
\end{align*}
We then can upper bound
\begin{align*} 
&\sum_{s} g_{s} \left|\sum_{q,r=1}^K \Big(-\frac{\partial^2 L}{\partial f_ q \partial f_r}\Big|_{f(x^i)}\Big) \frac{\partial f_r}{\partial h_{a}}\Big|_{x^i} \frac{\partial f_q}{\partial g_{s}}\Big|_{x^i}\right|\\
\leq &\sum_{q,r=1}^K  \left|\Big(\frac{\partial^2 L}{\partial f_ q \partial f_r}\Big|_{f(x^i)}\Big) \right| \frac{\partial f_r}{\partial h_{a}}\Big|_{x^i}\sum_{s} g_{s}\frac{\partial f_q}{\partial g_{s}}\Big|_{x^i}\\
\leq & \max_{l=1,\ldots,K}\sum_{s} g_{s}\frac{\partial f_l}{\partial g_{s}}\Big|_{x^i}\;  \sum_{r=1}^K \frac{e^{f_r(x^i)}}{\sum_{j=1}^K e^{f_j(x^i)}}  \Big[ \sum_{q\neq r} \frac{e^{f_q(x^i)}}{\sum_{j=1}^K e^{f_j(x)}} 
+\Big (1- \frac{e^{f_r(x^i)}}{\sum_{j=1}^K e^{f_j(x^i)}}\Big)\Big]\frac{\partial f_r}{\partial h_{a}}\Big|_{x^i}\\
=&2 \max_{l=1,\ldots,K}\sum_{s} g_{s}\frac{\partial f_l}{\partial g_{s}}\Big|_{x^i}\; \sum_{r=1}^K \frac{e^{f_r(x^i)}}{\sum_{j=1}^K e^{f_j(x^i)}} \Big (1- \frac{e^{f_r(x^i)}}{\sum_{j=1}^K e^{f_j(x^i)}}\Big)\frac{\partial f_r}{\partial h_{a}}\Big|_{x^i}\\
\leq & 2  \max_{l=1,\ldots,K}\sum_{s} g_{s}\frac{\partial f_l}{\partial g_{s}}\Big|_{x^i}\;\sum_{r=1}^K\Big (1- \frac{e^{f_r(x^i)}}{\sum_{j=1}^K e^{f_j(x^i)}}+\delta_{y_i r}\Big)\frac{\partial f_r}{\partial h_{a}}\Big|_{x^i} 
\end{align*}
where we have used the non-negativity of the derivatives of the function class in the first step and
\[ \sum_{q \neq r } \frac{e^{f_q(x)}}{\sum_{j=1}^K e^{f_j(x)}}= 1- \frac{e^{f_r(x)}}{\sum_{j=1}^K e^{f_j(x)}}.\]

Now, note that
\begin{align*}
\sum_{s} g_{s} \frac{1}{n}\sum_{i=1}^n\sum_{r=1}^K \Big(-\frac{\partial L}{\partial f_r}\Big|_{f(x^i)} + 1\Big)  \frac{\partial^2 f_r}{\partial h_{a} \partial g_{s}}\Big|_{x^i}
&\leq  M_{h,g} \frac{1}{n}\sum_{i=1}^n \sum_{r=1}^K \Big(-\frac{\partial L}{\partial f_r}\Big|_{f(x^i)} + 1\Big) \frac{\partial f_r}{\partial h_{a}}\Big|_{x^i}\\
&\leq  M_{h,g} \frac{\partial \Phi}{\partial h_{a}}.
\end{align*}
This shows that
\begin{align*}
&\sum_{s} g_{s} \left|\frac{\partial^2 \Phi}{\partial g_{s} \partial h_{a}}\right|\\
\leq & 2  \max_{k,l}\sum_{s} g_{s}\frac{\partial f_l}{\partial g_{s}}\Big|_{x^k}\;  
\frac{1}{n}\sum_{i=1}^n  \sum_{r=1}^K\Big (1- \frac{e^{f_r(x^i)}}{\sum_{j=1}^K e^{f_j(x^i)}}+\delta_{y_i r}\Big)\frac{\partial f_r}{\partial h_{a}}\Big|_{x^i}
+M_{h,g} \frac{\partial \Phi}{\partial h_{a}}\\
\leq& \Big[2 \max_{k,l}\sum_{s} g_{s}\frac{\partial f_l}{\partial g_{s}}\Big|_{x^k} + M_{h,g}\Big] \frac{\partial \Phi}{\partial h_{a}}\leq (2C_g+M_{h,g})\frac{\partial \Phi}{\partial h_{a}}.
\end{align*}
Thus, we get $R_{h,g}\leq 2C_g+M_{h,g}$.
\end{proof}
The following lemma is useful for the estimation of the bounds $C_w,C_u$ in Proposition  \ref{boundthm}.
\begin{lem}\label{boundlema}
Let $\alpha\in\R^r_{++}$, $p,p_u\in [1,\infty)$ and $\C^{\alpha}_{p,p_u}\colon\R^{r}_{++}\times \R_{++}\to\R_{++}$ defined as in \eqref{defC}. Then, for $x\in\R^s_{+}$ and $u\in\R^{r\times s}_{++}$ satisfying $\norm{u}_{p_u}\leq \rho_u$, we have
\begin{equation*}
\Big[\sum_{l}\delta_l^p\Big(\sum_mu_{lm}x_m\Big)^{p\alpha_l}\Big]^{1/p}\leq \C^{\alpha}_{p,p_u}(\delta,\rho_u\norm{x}_{p_u'}).
\end{equation*}
Moreover, if $(\delta,t)\leq (\t \delta,\t t)$ then 
$\C^{\alpha}_{p,p_u}(\delta,t)\leq \C^{\alpha}_{p,p_u}(\t \delta,\t t)\leq\sum_{l=1}^r\t \delta_l\t t^{\alpha_l}$.
\end{lem}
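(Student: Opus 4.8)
The plan is to establish the two assertions separately, the real content being in the first inequality. The key idea is to replace the linear forms $\sum_m u_{lm}x_m$ by the scaled row-norms $a_l:=\norm{u_{l\cdot}}_{p_u}/\rho_u$ and then exploit the single budget constraint they satisfy. By Hölder's inequality, $\sum_m u_{lm}x_m\leq \norm{u_{l\cdot}}_{p_u}\norm{x}_{p_u'}=a_l\,t$ with $t:=\rho_u\norm{x}_{p_u'}$, while $\sum_l a_l^{p_u}=\norm{u}_{p_u}^{p_u}/\rho_u^{p_u}\leq 1$, so in particular $0\leq a_l\leq 1$ for all $l$. Therefore
\[
\sum_l \delta_l^p\Big(\sum_m u_{lm}x_m\Big)^{p\alpha_l}\;\leq\;\sum_l (\delta_l t^{\alpha_l})^p\,a_l^{p\alpha_l},
\]
and it remains to bound the right-hand side by $\C^{\alpha}_{p,p_u}(\delta,t)^p$.

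Write $q=p_u$ and split according to $J=\{l\mid\alpha_l p\leq q\}$, $J^c=\{l\mid\alpha_l p>q\}$ as in \eqref{defC}. On $J^c$ one has $\alpha_j p>q\geq 1$ and $0\leq a_j\leq 1$, hence $a_j^{p\alpha_j}\leq a_j^{q}$ and so $\sum_{j\in J^c}(\delta_j t^{\alpha_j})^p a_j^{p\alpha_j}\leq\big(\max_{j\in J^c}(\delta_j t^{\alpha_j})^p\big)\sum_{j\in J^c}a_j^q\leq\max_{j\in J^c}(\delta_j t^{\alpha_j})^p$. On $J$ one has $\alpha_l\geq\bar\alpha$, hence $a_l^{p\alpha_l}\leq a_l^{p\bar\alpha}$, and Hölder with the conjugate exponents $\tfrac{q}{q-\bar\alpha p}$ and $\tfrac{q}{\bar\alpha p}$ (both $\geq 1$ because $\bar\alpha p\leq q$ by definition of $J$) gives
\[
\sum_{l\in J}(\delta_l t^{\alpha_l})^p a_l^{p\bar\alpha}\;\leq\;\Big[\sum_{l\in J}(\delta_l t^{\alpha_l})^{\frac{pq}{q-\bar\alpha p}}\Big]^{1-\frac{\bar\alpha p}{q}}\Big(\sum_{l\in J}a_l^{q}\Big)^{\frac{\bar\alpha p}{q}}\;\leq\;\Big[\sum_{l\in J}(\delta_l t^{\alpha_l})^{\frac{pq}{q-\bar\alpha p}}\Big]^{1-\frac{\bar\alpha p}{q}}.
\]
Adding the two blocks gives precisely $\C^{\alpha}_{p,p_u}(\delta,t)^p$, which is the first claim. (In the borderline case $\bar\alpha p=q$, where the exponent in \eqref{defC} is understood as the limit $\max_{l\in J}(\delta_l t^{\alpha_l})^p$, and in the case $J=\emptyset$, the same estimates work using $a_l^{p\alpha_l}\leq a_l^q$ for every $l$.)

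The second assertion is routine. Monotonicity of $\C^{\alpha}_{p,p_u}$ holds because each $\delta_l t^{\alpha_l}$ is nondecreasing in $\delta_l$ and in $t$ (as $\alpha_l>0$), and \eqref{defC} is built from these quantities by sums, a maximum and positive powers, all order-preserving. For the bound $\C^{\alpha}_{p,p_u}(\t\delta,\t t)\leq\sum_l\t\delta_l\t t^{\alpha_l}$, put $z_l=\t\delta_l\t t^{\alpha_l}>0$; since $\tfrac{pq}{q-\bar\alpha p}\geq 1$ (because $p(q+\bar\alpha)\geq q+\bar\alpha>q$), monotonicity of $\ell_r$-norms yields $\big[\sum_{l\in J}z_l^{\frac{pq}{q-\bar\alpha p}}\big]^{\frac{q-\bar\alpha p}{pq}}\leq\sum_{l\in J}z_l$, hence $\big[\sum_{l\in J}z_l^{\frac{pq}{q-\bar\alpha p}}\big]^{1-\frac{\bar\alpha p}{q}}\leq\big(\sum_{l\in J}z_l\big)^p$; also $\max_{j\in J^c}z_j^p\leq\big(\sum_{j\in J^c}z_j\big)^p$. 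Finally $a^p+b^p\leq(a+b)^p$ for $a,b\geq0$, $p\geq1$ gives $\C^{\alpha}_{p,p_u}(\t\delta,\t t)^p\leq\big(\sum_{l\in J}z_l+\sum_{j\in J^c}z_j\big)^p=\big(\sum_l z_l\big)^p$.

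I expect the main obstacle to be the first estimate: the temptation is to bound each $\sum_m u_{lm}x_m$ by $t$ separately, but that loses a factor $|J^c|$ on the $J^c$-block, so one must instead carry the single budget $\sum_l a_l^q\leq 1$ through the $J/J^c$ splitting and, on $J$, match the Hölder exponents to those appearing in the definition \eqref{defC} of $\C^{\alpha}_{p,p_u}$; everything else is $\ell_r$-norm bookkeeping.
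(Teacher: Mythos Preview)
Your proof is correct and follows essentially the same route as the paper: H\"older on each row to pass to $a_l=\norm{u_{l\cdot}}_{p_u}/\rho_u$, the $J/J^c$ split, the max-times-budget bound on $J^c$, and H\"older on $J$ with the conjugate pair $\big(\tfrac{q}{q-\bar\alpha p},\tfrac{q}{\bar\alpha p}\big)$; the monotonicity and $\ell_r$-comparison arguments for the second assertion are likewise the same. The only cosmetic difference is that you first reduce $a_l^{p\alpha_l}$ to $a_l^{p\bar\alpha}$ before applying H\"older, whereas the paper carries the $l$-dependent exponent through and bounds $\sum_{l\in J}a_l^{q\alpha_l/\bar\alpha}\leq\sum_{l\in J}a_l^q$ afterwards---same content, slightly different bookkeeping.
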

\begin{proof}
Let $J=\{l\in [r]\mid \alpha_lp<p_u\}$ and $J^c=\{l\in [r]\mid \alpha_lp\geq p_u\}$, we start with some observations. On the one hand, as $\norm{u}_{p_u}\leq \rho_u$, we have
\begin{equation*}
\sum_{l\in J^c}\Big(\frac{\sum_mu_{lm}^{p_u}}{\rho_u^{p_u}}\Big)^{{p\alpha_l}/{p_u}}\leq \sum_{l\in J^c}\frac{\sum_{m}u_{lm}^{p_u}}{\rho_u^{p_u}}\leq \frac{\norm{u}_{p_u}^{p_u}}{\rho_u^{p_u}}\leq 1.
\end{equation*}
On the other hand, if $\bar \alpha = \min_{l\in J}\alpha_l$, then $\bar p =\frac{ p_u}{\bar\alpha p}>1$, $\bar p'=\frac{p_u}{p_u-\bar\alpha p}$, and
\begin{equation*}
\sum_{l\in J}\Big(\frac{\sum_mu_{lm}^{p_u}}{\rho_u^{p_u}}\Big)^{\bar p(p\alpha_l/p_u)}=\sum_{l\in J}\Big(\frac{\sum_mu_{lm}^{p_u}}{\rho_u^{p_u}}\Big)^{\frac{\alpha_l}{\bar \alpha}}\leq \sum_{l\in J}\Big(\frac{\sum_mu_{lm}^{p_u}}{\rho_u^{p_u}}\Big)\leq 1.
\end{equation*}
It follows that
\begin{align*}
\sum_{l}&\delta_l^p\Big(\sum_{m}u_{lm}x_m\Big)^{p\alpha_l}\leq \sum_{l}\Big(\sum_mu_{lm}^{p_u}\Big)^{\frac{p\alpha_l}{p_u}}\delta_l^p\norm{x}_{p'_u}^{p\alpha_l}\\
&=\sum_{l\in J}\Big(\frac{\sum_mu_{lm}^{p_u}}{\rho_u^{p_u}}\Big)^{{p\alpha_l}/{p_u}}\delta_l^p(\rho_u\norm{x}_{p'_u})^{p\alpha_l}+\sum_{l\in J^c}\Big(\frac{\sum_mu_{lm}^{p_u}}{\rho_u^{p_u}}\Big)^{{p\alpha_l}/{p_u}}\delta_l^p(\rho_u\norm{x}_{p'_u})^{p\alpha_l}\\
&\leq\sum_{l\in J}\Big(\frac{\sum_mu_{lm}^{p_u}}{\rho_u^{p_u}}\Big)^{{p\alpha_l}/{p_u}}\delta_l^p(\rho_u\norm{x}_{p'_u})^{p\alpha_l}+\max_{j\in J^c}\delta_j^p(\rho_u\norm{x}_{p'_u})^{p\alpha_j}\sum_{l\in J^c}\Big(\frac{\sum_mu_{lm}^{p_u}}{\rho_u^{p_u}}\Big)^{{p\alpha_l}/{p_u}}\\
&\leq\Big[\sum_{l\in J}\Big(\frac{\sum_mu_{lm}^{p_u}}{\rho_u^{p_u}}\Big)^{\bar p({p\alpha_l}/{p_u})}\Big]^{1/\bar p}\Big[\sum_{l\in J}(\delta_l^{p}(\rho_u\norm{x}_{p'_u})^{p\alpha_l})^{\bar p'}\Big]^{1/\bar p'}+\max_{j\in J^c}\delta_j^p(\rho_u\norm{x}_{p'_u})^{p\alpha_j}\\
&\leq\Big[\sum_{l\in J}(\delta_l^{p}(\rho_u\norm{x}_{p'_u})^{p\alpha_l})^{\bar p'}\Big]^{1/\bar p'}+\max_{j\in J^c}\delta_j^p(\rho_u\norm{x}_{p'_u})^{p\alpha_j}=\big[\C^{\alpha}_{p,p_u}(\delta,\rho_u\norm{x}_{p'_u})\big]^p
\end{align*}
In particular, we see that
\begin{equation*}
\C^{\alpha}_{p,p_u}(\delta,t)=\bigg(\Big[\sum_{l\in J}(\delta_l\,t^{\alpha_l})^{p\,\bar p'}\Big]^{1/\bar p'}+\max_{j\in J^c}\,(\delta_j\,t^{\alpha_j})^{p}\bigg)^{1/p}.
\end{equation*}
Finally, let $(\delta,t)\leq (\t \delta, \t t)$. By monotonicity of $\norm{\cdot}_{\bar p}$ and $\norm{\cdot}_p$, we have $\C^{\alpha}_{p,p_u}(\delta,t)\leq \C^{\alpha}_{p,p_u}(\t\delta,\t t)$. Now, using $\norm{\cdot}_p\leq\norm{\cdot}_1$ and $\norm{\cdot}_{p\,\bar p'}\leq\norm{\cdot}_1$, we get
\begin{align*}
\C^{\alpha}_{p,p_u}(\delta,t)&=\bigg(\Big(\Big[\sum_{l\in J}(\delta_l\,t^{\alpha_l})^{p\,\bar p'}\Big]^{1/(p\,\bar p')}\Big)^{p}+\max_{j\in J^c}(\delta_j\,t^{\alpha_j})^{p}\bigg)^{1/p}\\
&\leq \Big[\sum_{l\in J}(\delta_l\,t^{\alpha_l})^{p\,\bar p'}\Big]^{1/(p\,\bar p')}+\max_{j\in J^c}\delta_j\,t^{\alpha_j}\leq \sum_{l\in J}\delta_l\,t^{\alpha_l}+\max_{j\in J^c}\delta_j\,t^{\alpha_j} \\
&\leq \sum_{l\in[r]}\delta_l\,t^{\alpha_l}.
\hfill \qedhere
\end{align*}
\end{proof}
Finally, using the Lemma above, we explicit the bounds of Proposition \ref{boundthm}.
\begin{lem}\label{bound1}
Let $f$ be defined as in \eqref{eq:function-class} and let $\rho_x=\max_{i=1\in[n]}\norm{x^i}_{p_u'}$, then the bounds in Proposition \ref{boundthm} are given by $
C_w=\rho_w\C_{p'_w,p_u}^{\alpha}(\ones,\rho_u\rho_x)$, $C_u = \rho_w\C^{\alpha}_{p_w',p_u}(\alpha,\rho_u\rho_x)$, $M_{w,w}=0$, $M_{w,u}=\norm{\alpha}_{\infty}$, $M_{u,w}=1$ and $M_{u,u}=\norm{\alpha}_{\infty}-1$.
\end{lem}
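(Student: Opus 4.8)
The plan is a direct computation. Writing $v_l=v_l(x)=\sum_{m=1}^d u_{lm}x_m$, the function \eqref{eq:function-class} reads $f_r(w,u)(x)=\sum_l w_{r,l}v_l^{\alpha_l}$, which is linear in $w$; hence $\partial f_r/\partial w_{s,t}=\delta_{rs}v_t^{\alpha_t}$, all second derivatives of $f_r$ in $w$ vanish (so that $M_{w,w}=0$), and $\partial f_r/\partial u_{ab}=w_{r,a}\alpha_a v_a^{\alpha_a-1}x_b$. First I would compute the remaining second derivatives by differentiating these once more: each differentiation produces one extra Kronecker delta and lowers one power of $v_a$ by one. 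Here the hypothesis $\alpha_i\ge 1$ is what keeps $v_a^{\alpha_a-1}$, and after two $u$-derivatives $v_a^{\alpha_a-2}$ (which is subsequently multiplied by $\sum_b u_{ab}x_b=v_a$), well defined and nonnegative on $B_{++}$; the only degenerate case $x^i=0$ makes all derivatives of $f_r$ at $x^i$ vanish, so every inequality of Proposition \ref{boundthm} is then trivial.

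Next I would carry out the inner sums $\sum_t w_{s,t}(\cdots)$ and $\sum_{a,b}u_{ab}(\cdots)$ occurring in Proposition \ref{boundthm}. The point is that, thanks to the monomial structure, each of these sums reproduces the corresponding first derivative up to an explicit scalar: using $\sum_b u_{ab}x_b=v_a$ one gets $\sum_{a,b}u_{ab}\,\partial^2 f_r/(\partial u_{ab}\partial w_{s,t})=\alpha_t\,\partial f_r/\partial w_{s,t}$, whence $M_{w,u}=\norm{\alpha}_\infty$; likewise $\sum_t w_{s,t}\,\partial^2 f_r/(\partial w_{s,t}\partial u_{ab})=\delta_{rs}\,\partial f_r/\partial u_{ab}$, whence $M_{u,w}=1$; and $\sum_{a,b}u_{ab}\,\partial^2 f_r/(\partial u_{ab}\partial u_{st})=(\alpha_s-1)\,\partial f_r/\partial u_{st}$, whence $M_{u,u}=\norm{\alpha}_\infty-1$. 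These are equalities up to the stated constants, so there is nothing to estimate in this part.

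The only genuine estimate concerns $C_w$ and $C_u$. One has $\sum_t w_{s,t}\,\partial f_r/\partial w_{s,t}=\delta_{rs}\sum_l w_{r,l}v_l^{\alpha_l}$ and $\sum_{a,b}u_{ab}\,\partial f_r/\partial u_{ab}=\sum_l w_{r,l}\alpha_l v_l^{\alpha_l}$. I would apply H\"older's inequality with exponents $p_w,p_w'$ to peel off $\norm{w_r}_{p_w}\le\rho_w$, leaving $\big(\sum_l\delta_l^{p_w'}v_l^{p_w'\alpha_l}\big)^{1/p_w'}$ with $\delta=\ones$, respectively $\delta=\alpha$, and then invoke Lemma \ref{boundlema} with $p=p_w'$ and the constraint $\norm{u}_{p_u}\le\rho_u$ to bound this quantity by $\C^{\alpha}_{p_w',p_u}(\delta,\rho_u\norm{x}_{p_u'})$. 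Finally, the monotonicity statement in Lemma \ref{boundlema} together with $\norm{x^i}_{p_u'}\le\rho_x$ for all $i$ lets me replace $\norm{x}_{p_u'}$ by $\rho_x$, yielding $C_w=\rho_w\C^{\alpha}_{p_w',p_u}(\ones,\rho_u\rho_x)$ and $C_u=\rho_w\C^{\alpha}_{p_w',p_u}(\alpha,\rho_u\rho_x)$ uniformly over the training data. I do not expect a real obstacle here: the proof is essentially bookkeeping with indices and Kronecker deltas, the one substantive ingredient being Lemma \ref{boundlema}, and the one assumption that is actually used being $\alpha_i\ge 1$.
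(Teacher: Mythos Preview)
Your proposal is correct and matches the paper's own proof essentially step for step: explicit computation of the first and second partial derivatives of $f_r$, the observation that the monomial structure makes each second-order sum a scalar multiple of the corresponding first derivative (giving $M_{w,w}=0$, $M_{w,u}=\norm{\alpha}_\infty$, $M_{u,w}=1$, $M_{u,u}=\norm{\alpha}_\infty-1$), and then H\"older with exponents $p_w,p_w'$ followed by Lemma~\ref{boundlema} and its monotonicity to obtain $C_w$ and $C_u$. Your added remarks on the role of $\alpha_i\ge 1$ and the degenerate case $x^i=0$ are fine and do not alter the argument.
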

\begin{proof}
Let $x\in\{x^1,\ldots,x^n\}$ and, for $(w,u)\in B_{++}$, we have 
\begin{align*}\label{eq:function-deriv}
\frac{\partial f_r(x)}{\partial w_{a,b}} &= \delta_{ra}  \big(\sum_{m} u_{bm} x_m\big)^{\alpha_b},\\
\frac{\partial f_r(x)}{\partial u_{ab}} &= w_{r,a} \alpha_a \big(\sum_{m} u_{am} x_m\big)^{\alpha_a-1} x_b,\\
\frac{\partial^2 f_r(x)}{\partial w_{s,t} \partial w_{a,b}} &= 0,\\
\frac{\partial^2 f_r(x)}{\partial w_{s,t} \partial u_{ab}} &= \delta_{rs} \delta_{at} \alpha_a \big(\sum_{m} u_{am} x_m\big)^{\alpha_a-1} x_b,\\
\frac{\partial^2 f_r(x)}{\partial u_{st} \partial u_{ab}} &= \delta_{as} w_{r,a} \alpha_a (\alpha_a-1)\big(\sum_{m} u_{am} x_m\big)^{\alpha_a-2} x_b x_t.
\end{align*}
With Lemma \ref{boundlema}, we have
\begin{align*}
\sum_b w_{a,b}  \frac{\partial f_r(x)}{\partial w_{a,b}}&=\delta_{ra} \sum_b w_{a,b} \big(\sum_{m} u_{bm} x_m\big)^{\alpha_b} \leq\delta_{ra}\norm{w_a}_{p_w}\Big(\sum_b\big(\sum_{m} u_{bm} x_m\big)^{\alpha_bp_w'}\Big)^{1/p'_w}\\
& \leq \delta_{ra}\norm{w_a}_{p_w}\C^{\alpha}_{p_w',p_u}(\ones,\rho_u\norm{x}_{p_u'})\leq \rho_w\C^{\alpha}_{p_w',p_u}(\ones,\rho_u\rho_x)= C_w,
\end{align*}
and
\begin{align*}
\sum_{a,b} u_{ab} \frac{\partial f_r(x)}{\partial u_{ab}} &\leq\sum_{a} w_{r,a} \alpha_a\big(\sum_{m} u_{am} x_m\big)^{\alpha_a}\leq \norm{w_r}_{p_w}\C^{\alpha}_{p_w',p_u}(\alpha,\rho_u\norm{x}_{p_u'})\\
&\leq \rho_w\C^{\alpha}_{p_w',p_u}(\alpha,\rho_u\rho_x)=C_u.
\end{align*}
Furthermore,
\begin{align*}
\sum_{t} w_{s,t} \frac{\partial^2 f_r(x)}{\partial w_{s,t} \partial w_{a,b}} & = 0 \qquad \implies  M_{w,w} =0 \\
\sum_{a,b} u_{ab} \frac{\partial^2 f_r(x)}{\partial u_{ab}\partial w_{s,t} } &= \delta_{rs} \alpha_t \big(\sum_{m} u_{tm} x_m\big)^{\alpha_t} = \alpha_t \frac{\partial f(x)}{\partial w_{s,t}}\leq \norm{\alpha}_{\infty}\frac{\partial f(x)}{\partial w_{s,t}}\\ &\qquad\implies M_{w,u}=\norm{\alpha}_{\infty} \\
\sum_{t} w_{s,t} \frac{\partial^2 f_r(x)}{\partial w_{s,t} \partial u_{ab}} &=\delta_{rs} w_{s,a} \alpha_a \big(\sum_{m} u_{am} x_m\big)^{\alpha_a-1} x_b= \delta_{sr}\frac{\partial f_r(x)}{\partial u_{ab}}\leq \frac{\partial f_r(x)}{\partial u_{ab}}\\ &\qquad \implies M_{u,w}=1\\
\sum_{a,b} u_{ab}  \frac{\partial^2 f_r(x)}{\partial u_{st} \partial u_{ab} }&=  w_{r,s} \alpha_s (\alpha_s-1)\big(\sum_{m} u_{sm} x_m\big)^{\alpha_s-1} x_t=(\alpha_s-1)\frac{\partial f_r(x)}{\partial u_{st}}\\ & \leq \norm{\alpha-\ones}_{\infty}\frac{\partial f_r(x)}{\partial u_{st}} \qquad\implies  M_{u,u}=\norm{\alpha-\ones}_{\infty}. 
\end{align*}
Finally, as $\alpha_i \geq 1$ for every $i\in[n_1]$, we have $\norm{\alpha-\ones}_{\infty}=\norm{\alpha}_{\infty}-1$.
\end{proof}
Combining Lemma \ref{bound1} and Proposition \ref{boundthm} concludes the proof. Finally, we note that the upper bound on $\C^{\alpha}_{p,q}$ proved in Lemma \ref{boundlema} implies that the matrix $A$ in Theorem \ref{lay1thm} is componentwise smaller or equal than that of Theorem \ref{mainthm} and thus has a smaller spectral radius by Corollary 3.30 \cite{Plemmons}.
\end{proof}
\fi
\subsection{Neural networks with two hidden layers}\label{sec:hidden2}
We show how to extend our framework for neural networks with $2$ hidden layers. In future work we will consider the general case.
We briefly explain the major changes. Let $n_1,n_2\in\N$ and $\alpha \in \R_{++}^{n_1},\beta\in\R_{++}^{n_2}$ with $\alpha_i,\beta_j\geq1$ for all $i\in[n_1],j\in[n_2]$, our function class is:
\begin{equation*}
 f_r(x)= f_r(w,v,u)(x)= \sum_{l=1}^{n_2}w_{r,l}\Big(\sum_{m=1}^{n_1}v_{lm}\Big(\sum_{s=1}^{d}u_{ms}x_{s}\Big)^{\alpha_{m}}\Big)^{\beta_{l}}
\end{equation*}
and the optimization problem becomes
\begin{equation}\label{eq:optipb3}
\max_{(w,v,u)\in S_+}  \Phi(w,v,u) \qquad \text{where}\qquad V_+=\R_+^{K \times n_2} \times \R_+^{n_2 \times n_1} \times \R^{n_1 \times d}_+,
\end{equation}
$S_+ =\{(w_1,\ldots,w_K,v,u)\in V_+ \mid \norm{w_i}_{p_w}= \rho_w, \, \norm{v}_{p_v}= \rho_v,\, \norm{u}_{p_u}=\rho_u\}$ and
\begin{equation*}
\Phi(w,v,u)=\frac{1}{n}\sum_{i=1}^n\Big[-L\big(y_i,f(x^i)\big)+\sum_{r=1}^K f_r(x^i)\Big]+\epsilon\Big(\sum_{r=1}^K\sum_{l=1}^{n_2}w_{r,l}+\sum_{l=1}^{n_2}\sum_{m=1}^{n_1}v_{lm}+\sum_{m=1}^{n_1}\sum_{s=1}^du_{ms}\Big).
\end{equation*}
The map $G^{\Phi}\colon S_{++}\to S_{++}=\{z\in S_+\mid z>0\}$, $G^{\Phi}= (G_{w_1}^{\Phi},\ldots, G_{w_K}^{\Phi},G_{v}^{\Phi},G_{u}^{\Phi})$, becomes
\begin{equation}\label{defG3}
G_{w_i}^{\Phi}(w,v,u)=\rho_w\frac{\psi_{p_w'}(\grad_{w_i}\Phi(w,u))}{\norm{\psi_{p_w'}(\grad_{w_{i}}\Phi(w,v,u))}_{p_w}}\qquad \forall i\in[K]
\end{equation}
and
\begin{equation*}
G_v^{\Phi}(w,v,u)=\rho_v\frac{\psi_{p_v'}(\grad_v\Phi(w,v,u))}{\norm{\psi_{p_v'}(\grad_v\Phi(w,v,u))}_{p_v}}, \qquad G_{u}^{\Phi}(w,v,u)=\rho_u\frac{\psi_{p_u'}(\grad_u\Phi(w,v,u))}{\norm{\psi_{p_u'}(\grad_u\Phi(w,v,u))}_{p_u}}.
\end{equation*}

We have the following equivalent of Theorem \ref{mainthm} for $2$ hidden layers.
\begin{thm}\label{mainthm2}
Let $\{x^i,y_i\}_{i=1}^n\subset\R_+^d\times[K]$, $p_w,p_v,p_u\in (1,\infty)$, $\rho_w,\rho_v,\rho_u>0$, $n_1,n_2\in\N$ and $\alpha\in\R^{n_1}_{++},\beta\in\R^{n_2}_{++}$ with $\alpha_i,\beta_j\geq 1$ for all $i\in[n_1],j\in[n_2]$. Let $\rho_x=\max_{i\in[n]}\norm{x^i}_{p_u'}$,
\begin{equation*}
\theta = \rho_v\C_{p_v',p_u}^{\alpha}(\ones,\rho_u\rho_x), \quad C_w=\rho_w\C_{p_w',p_v}^{\beta}(\ones,\theta), \quad C_v = \rho_w\C^{\beta}_{p_w',p_v}(\beta,\theta), \quad C_u=\norm{\alpha}_{\infty}C_v,
\end{equation*}
and define $A\in\R^{(K+2)\times(K+2)}_{++}$ as
\begin{equation*}
\begin{array}{rlrl}
 A_{m,l}\!\!\!\!\!&= 4(p_w'-1)C_w,&
A_{m,K+1} \!\!\!\!\!&= 2(p_w'-1)(2C_v+\norm{\beta}_{\infty}) \\
 A_{m,K+2}\!\!\!\!\!&= 2(p_w'-1)\big(2C_u+\norm{\alpha}_{\infty}\norm{\beta}_{\infty}\big),& A_{K+1,l}\!\!\!\!\!&=2(p_v'-1)\big(2C_w+1\big)\\  
 A_{K+1,K+1}\!\!\!\!\!&=2(p_v'-1)\big(2C_v+\norm{\beta}_{\infty}-1\big),& A_{K+1,K+2}\!\!\!\!\!&=2(p_v'-1)\big(2C_u+\norm{\alpha}_{\infty}\norm{\beta}_{\infty}\big)\\
A_{K+2,l}\!\!\!\!\!&=2(p_u'-1)(2C_w+1),& 
A_{K+2,K+1}\!\!\!\!\! &= 2(p_u'-1)(2C_v+\norm{\beta}_{\infty}),\\  A_{K+2,K+2}\!\!\!\!\!&=2(p_u'-1)(2C_u+\norm{\alpha}_{\infty}\norm{\beta}_{\infty}-1) & & \forall m,l\in[K].
\end{array}
\end{equation*}
If $\rho(A)<1$, then \eqref{eq:optipb3} has a unique global maximizer $(w^*,v^*,u^*)\in S_{++}$. Moreover, for every $(w^0,v^0,u^0)\in S_{++}$, there exists $R>0$ such that
\begin{equation*}
\lim_{k\to \infty} (w^k,v^k,u^k)= (w^*,v^*,u^*) \andd \norm{(w^k,v^k,u^k)-(w^*,v^*,u^*)}_\infty\leq R\,\rho(A)^k\quad \forall k\in \N
\end{equation*}
where $(w^{k+1},v^{k+1},u^{k+1})=G^{\Phi}(w^k,v^k,u^k)$ for every $k\in\N$ and $G^{\Phi}$ is defined as in \eqref{defG3}.
\end{thm}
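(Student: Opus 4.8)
The plan is to repeat the argument that led to Theorem~\ref{mainthm}, now with the three blocks of variables $w=(w_1,\dots,w_K)$, $v$ and $u$ in place of $(w,u)$, so that every matrix involved has size $K+2$ instead of $K+1$. First I would establish the three-block analogues of the structural results of Section~\ref{optifix}. The proof of Lemma~\ref{le:global-interior} applies verbatim to each of the $K+2$ normalized blocks: if $\grad\Phi(w,v,u)\in V_{++}$ on all of $S_+$ then no boundary point of $S_+$ can be a local maximum, so the (existing, by compactness) global maximum of $\Phi$ on $S_+$ lies in $S_{++}$. Likewise, writing the Lagrangian with $K+2$ multipliers and using $\psi_{p'}(\psi_p(x))=x$ as in Lemma~\ref{le:criticalfixed} shows that a point of $S_{++}$ is a critical point of $\Phi$ if and only if it is a fixed point of the map $G^{\Phi}$ of \eqref{defG3}; strict positivity of $\grad\Phi$ on $S_{++}$ (proved below) is what makes $G^\Phi$ well defined with values in $S_{++}$ and forces the multipliers to be positive.

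Next I would set up the metric space. With $B^i_{++}$ the $\norm{\cdot}_{p_w}\!\le\rho_w$ ball in $\R^{n_2}_{++}$ for $i\in[K]$, $B^{K+1}_{++}$ the $\norm{\cdot}_{p_v}\!\le\rho_v$ ball in $\R^{n_2\times n_1}_{++}$, and $B^{K+2}_{++}$ the $\norm{\cdot}_{p_u}\!\le\rho_u$ ball in $\R^{n_1\times d}_{++}$, each is complete for the (rescaled) Thompson metric, hence so is the product $B_{++}$ for $\mu=\sum_{i=1}^{K+2}\gamma_i\norm{\ln(\cdot)-\ln(\cdot)}_\infty$. The contraction estimate of Lemma~\ref{contract} goes through with $K+1$ replaced by $K+2$: if a nonnegative $(K+2)\times(K+2)$ matrix $A$ bounds the logarithmic derivatives of $G^\Phi$ with respect to the blocks $w_1,\dots,w_K,v,u$ exactly as in the hypotheses of that lemma, then $G^\Phi$ is $\mu$-Lipschitz with constant $U=\max_k(A^T\gamma)_k/\gamma_k$, minimized to $\rho(A)$ by taking $\gamma$ to be the Perron eigenvector of $A^T$ via Collatz--Wielandt \eqref{CW}. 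Combining this with the Banach fixed point theorem~\ref{Banachthm} and the two structural lemmas, exactly as in Theorem~\ref{maincor}, gives that $\rho(A)<1$ forces a unique critical point $=$ unique global maximizer $(w^*,v^*,u^*)\in S_{++}$, together with the linear rate $\norm{(w^k,v^k,u^k)-(w^*,v^*,u^*)}_\infty\le R\,\rho(A)^k$ after converting $\mu$ to $\norm{\cdot}_\infty$ through the bound $|e^s-e^t|\le e^{\max\{s,t\}}|s-t|$ on $(-\infty,\ln\rho_w]$, $(-\infty,\ln\rho_v]$, $(-\infty,\ln\rho_u]$.

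It then remains to produce the explicit $A$. First I would check the analogue of Lemma~\ref{le:gradient-positive}: differentiating $\Phi$ and using $-\partial L/\partial f_r+1>0$ together with $\alpha_i,\beta_j\ge1$ (which makes the powers $\alpha_m-1$ and $\beta_l-1$ nonnegative, so $\grad\Phi$ is well defined and nonnegative on all of $S_+$, the $\epsilon$-terms then making it strictly positive). Next, the three-block version of Proposition~\ref{boundthm}: working with abstract blocks $g,h$, the chain-rule identity for $\partial G^\Phi_{h_a}/\partial g_s$ together with the cross-entropy Hessian estimate of Proposition~\ref{boundthm} yields $A_{i,k}=2(p'_i-1)(2C_k+M_{i,k})$ whenever $\sum_{z}z\,\partial f_r/\partial z\le C_k$ for the $k$-th block and $\sum_z z\,\partial^2 f_r/\partial z\,\partial z'\le M_{i,k}\,\partial f_r/\partial z'$; the factor $4$ in the $(w,w)$ block of the statement is exactly $2\cdot 2C_w$ since $M_{w,w}=0$.

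Finally I would compute these constants for the nested class. Writing $g_m(x)=(\sum_s u_{ms}x_s)^{\alpha_m}$ and $h_l(x)=\sum_m v_{lm}g_m(x)$, one has $\partial f_r/\partial w_{a,b}=\delta_{ra}h_b^{\beta_b}$, $\partial f_r/\partial v_{ab}=w_{r,a}\beta_ah_a^{\beta_a-1}g_b$, $\partial f_r/\partial u_{ab}=\sum_l w_{r,l}\beta_lh_l^{\beta_l-1}v_{la}\alpha_a(\sum_s u_{as}x_s)^{\alpha_a-1}x_b$, and the corresponding second derivatives. Applying Lemma~\ref{boundlema} to the inner sum with exponent pair $(p_v',p_u)$ gives $\norm{(g_m(x))_m}_{p_v'}\le\C^\alpha_{p_v',p_u}(\ones,\rho_u\rho_x)$, hence $h_l(x)\le\rho_v\C^\alpha_{p_v',p_u}(\ones,\rho_u\rho_x)=\theta$; applying it again to the outer sum with exponent pair $(p_w',p_v)$ and ``base'' $\theta$, and using that $\C$ is increasing in its second argument, gives $\sum_l w_{r,l}h_l^{\beta_l}\le\rho_w\C^\beta_{p_w',p_v}(\ones,\theta)=C_w$ and $\sum_l w_{r,l}\beta_lh_l^{\beta_l}\le\rho_w\C^\beta_{p_w',p_v}(\beta,\theta)=C_v$, while the $u$-sum telescopes (via $\sum_b u_{ab}(\sum_s u_{as}x_s)^{\alpha_a-1}x_b=g_a(x)$) to $\sum_l w_{r,l}\beta_lh_l^{\beta_l-1}\sum_a v_{la}\alpha_ag_a(x)\le\norm{\alpha}_\infty C_v=C_u$. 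The second-derivative ratios are read off directly: $M_{w,w}=0$, $M_{w,v}=M_{u,v}=\norm{\beta}_\infty$, $M_{w,u}=M_{v,u}=\norm{\alpha}_\infty\norm{\beta}_\infty$, $M_{v,w}=M_{u,w}=1$, $M_{v,v}=\norm{\beta}_\infty-1$ and $M_{u,u}=\norm{\alpha}_\infty\norm{\beta}_\infty-1$ (the last two using $\alpha_i,\beta_j\ge1$). Substituting into $A_{i,k}=2(p'_i-1)(2C_k+M_{i,k})$ reproduces the matrix in the statement. I expect the main obstacle to be precisely this last bookkeeping: correctly telescoping the chain rule for the mixed $(v,u)$ and $(u,u)$ second derivatives so that all surviving factors collapse exactly into $\norm{\alpha}_\infty\norm{\beta}_\infty$, and invoking Lemma~\ref{boundlema} twice in a way that respects its monotonicity hypotheses, since the outer application feeds in the constant input $\theta$ which already encapsulates the inner $\C^\alpha$-bound.
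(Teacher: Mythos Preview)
Your proposal is correct and follows essentially the same route as the paper's own proof: extend the metric space and the contraction lemma from $K+1$ to $K+2$ blocks, invoke the three-block versions of Lemmas~\ref{le:global-interior}, \ref{le:criticalfixed} and Theorem~\ref{maincor}, generalize Proposition~\ref{boundthm} to produce $A_{i,k}=2(p_i'-1)(2C_k+M_{i,k})$, and then compute the constants $C_w,C_v,C_u$ via a nested application of Lemma~\ref{boundlema} together with the full list of $M_{g,h}$ values by direct differentiation. Your values for all $C$'s and $M$'s match the paper's, including the more delicate $M_{u,u}=\norm{\alpha}_\infty\norm{\beta}_\infty-1$ and $M_{v,u}=\norm{\alpha}_\infty\norm{\beta}_\infty$ cases.
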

\ifpaper
\begin{proof}
The proof of Theorem \ref{maincor}, can be extended by considering the weighted Thompson metric $\mu\colon V_{++}\times V_{++}\to \R_+$ defined as
\begin{equation*}
\mu\big((w,v,u),(\t w,\t v,\t u)\big)=\sum_{i=1}^K\gamma_i\norm{\ln(w_i)-\ln(\t w_i)}_{\infty}+\gamma_{K+1}\norm{\ln(v)-\ln(\tilde v)}_{\infty}+\gamma_{K+2}\norm{\ln(u)-\ln(\tilde u)}.
\end{equation*}
In particular, when $\rho(A)<1$, the convergence rate becomes
\begin{equation*}
 \norm{(w^k,v^k,u^k)-(w^*,v^*,u^*)}_{\infty}\leq \rho(A)^k\bigg(\frac{\mu\big((w^1,v^1,u^1),(w^0,v^0,u^0)\big)}{\big(1-\rho(A)\big)\min\!\big\{\frac{\gamma_{K+2}}{\rho_u},\frac{\gamma_{K+1}}{\rho_v},{\min_{t\in[K]}}\frac{\gamma_t}{\rho_w}\big\}}\bigg)\quad \forall k\in\N,
\end{equation*}
where the weights in the definition of $\mu$ are the components of the positive eigenvector of $A^T$. As $A^T\in\R^{(K+2)\times (K+2)}_{++}$, this vector always exists by the Perron-Frobenius theorem (see for instance Theorem 8.4.4 in \cite{Horn}).
Proposition \ref{boundthm} generalizes straightforwardly. So, we need to compute the corresponding quantities $C_{g},M_{g,h}>0$ for $g,h\in\{w_1,\ldots,w_K,u,v\}$. To shorten notations, we write
\begin{equation*}
(ux)^{\alpha}=\big((ux)^{\alpha_1}_1,\ldots,(ux)^{\alpha_{n_1}}_{n_1}\big) \andd (v(ux)^{\alpha})^{\beta}=\big((v(ux)^{\alpha})^{\beta_1}_1,\ldots,(v(ux)^{\alpha})^{\beta_{n_2}}_{n_2}\big),
\end{equation*}
where 
\begin{equation*}
(ux)_i = \sum_{t=1}^d u_{it}x_t \andd (v(ux)^{\alpha})_j = \sum_{s=1}^{n_1}v_{js}(ux)^{\alpha_s}_s\qquad \forall i \in [n_1],\, j\in[n_2].
\end{equation*}
Again, we omit the summation intervals in the following.
First, we compute
\begin{align*}
\frac{\partial  f_r(x)}{\partial{w_{a, b}}}&=\delta_{r a}(v(ux)^{\alpha})_{b}^{\beta_b}\\
\frac{\partial  f_r(x)}{\partial{v_{a b}}}&=w_{r, a}\beta_{a}(v(ux)^{\alpha})_{a}^{\beta_a-1}(ux)_{b}^{\alpha_{b}} \\
\frac{\partial  f_r(x)}{\partial{u_{a b}}}&=\sum_{j}w_{r, j}\beta_{j}(v(ux)^{\alpha})_{j}^{\beta_{j}-1}v_{j a}\alpha_{a}(ux)_{a}^{\alpha_{a}-1}x_{b}
\end{align*}
Hence
\begin{align*}
\sum_{b}w_{a,b}\frac{\partial  f_r(x)}{\partial{w_{a,b}}}&=\delta_{r a}\sum_{b}w_{r, b}(v(ux)^{\alpha})_{b}^{\beta_b}=\delta_{r a} f_r(x)\\
\sum_{a,b}v_{a b}\frac{\partial  f_r(x)}{\partial{v_{a b}}}& =\sum_aw_{r, a}\beta_{a}(v(ux)^{\alpha})_{a}^{\beta_a-1}\sum_{b}v_{a b}(ux)_{b}^{\alpha_{b}}=\sum_a\beta_{a}w_{r, a}(v(ux)^{\alpha})_{a}^{\beta_a}\\
\sum_{a,b}u_{a b}\frac{\partial  f_r(x)}{\partial{u_{a b}}}&=\sum_{j}w_{r, j}\beta_{j}(v(ux)^{\alpha})_{j}^{\beta_{j}-1}\sum_a v_{j a}\alpha_{a}(ux)_{a}^{\alpha_{a}-1}\sum_{b}u_{a b}x_{b} \\ 
&=\sum_{j}\beta_{j}w_{r, j}(v(ux)^{\alpha})_{j}^{\beta_{j}-1}\sum_a\alpha_{a}v_{j a}(ux)_{a}^{\alpha_{a}}.
\end{align*}
It follows with Lemma \ref{boundlema} that, with $\theta=\rho_v\,\C_{p'_v,p_u}^{\alpha}(\ones,\rho_u\rho_x)$, we have
\begin{align*}
\sum_{b}w_{a,b}\frac{\partial  f_r(x)}{\partial{w_{a,b}}}&=\delta_{r a}\sum_{b}w_{a, b}(v(ux)^{\alpha})_{b}^{\beta_b}\leq \norm{w_a}_{p_w}\Big(\sum_{b}(v(ux)^{\alpha})_{b}^{\beta_bp_w'}\Big)^{1/p_w'}\\
&\leq \norm{w_a}_{p_w}\C_{p'_w,p_v}^{\beta}(\ones,\rho_v\norm{(ux)^{\alpha}}_{p'_v})\leq \norm{w_a}_{p_w}\C_{p'_w,p_v}^{\beta}\big(\ones,\rho_v\,\C_{p'_v,p_u}^{\alpha}(\ones,\rho_u\norm{x}_{p_u'})\big)\\ &\leq\rho_w\C_{p'_w,p_v}^{\beta}\big(\ones,\theta\big)=C_w\\
\sum_{a,b}v_{a b}\frac{\partial  f_r(x)}{\partial{v_{a b}}}& =\sum_a\beta_{a}w_{r, a}(v(ux)^{\alpha})_{a}^{\beta_a}\leq\norm{w_r}_{p_w}\Big(\sum_a\big[\beta_{a}(v(ux)^{\alpha})_{a}^{\beta_a}\big]^{p_w'}\Big)^{1/p_w'}\\
&\leq \norm{w_r}_{p_w}\C_{p'_w,p_v}^{\beta}(\beta,\rho_v\norm{(ux)^{\alpha}}_{p'_v})\leq \rho_w\C_{p'_w,p_v}^{\beta}\big(\beta,\theta\big)=C_v\\
\sum_{a,b}u_{a b}\frac{\partial  f_r(x)}{\partial{u_{a b}}}&=
\sum_{j}\beta_{j}w_{r, j}(v(ux)^{\alpha})_{j}^{\beta_{j}-1}\sum_a\alpha_{a}v_{j a}(ux)_{a}^{\alpha_{a}}\leq \norm{\alpha}_{\infty}\sum_{j}\beta_{j}w_{r, j}(v(ux)^{\alpha})_{j}^{\beta_{j}}\\
&\leq \norm{\alpha}_{\infty}C_v= C_u.
\end{align*}
Now, for the bound involving the second derivatives, we have
\begin{equation*}
\frac{\partial^2  f_r(x)}{\partial{w_{s,t}}\partial{w_{a,b}}}=0 \qquad \implies \qquad M_{w,w}=0,
\end{equation*}
and
\begin{equation*}
\frac{\partial^2  f_r(x)}{\partial{v_{s t}}\partial{v_{a b}}}
=\delta_{s a}w_{r, a}\beta_{a}(\beta_{a}-1)(v(ux)^{\alpha})_{a}^{\beta_a-2}(ux)_{b}^{\alpha_{b}}(ux)_{t}^{\alpha_{t}}
\end{equation*}
so that
\begin{align*}
\sum_{s,t}v_{st}\frac{\partial^2  f_r(x)}{\partial{v_{s t}}\partial{v_{a b}}}&=w_{r, a}\beta_{a}(\beta_{a}-1)(v(ux)^{\alpha})_{a}^{\beta_a-2}(ux)_{b}^{\alpha_{b}}\sum_{t}v_{at}(ux)_{t}^{\alpha_{t}}\\ 
&=w_{r, a}(\beta_{a}-1)\beta_{a}(v(ux)^{\alpha})_{a}^{\beta_a-1}(ux)_{b}^{\alpha_{b}}\\ &=(\beta_a-1)\frac{\partial f_r(x)}{\partial v_{ab}}\leq\norm{\beta-\ones}_{\infty}\frac{\partial f_r(x)}{\partial v_{ab}}\qquad \implies \qquad M_{v,v}=\norm{\beta-\ones}_{\infty}.
\end{align*}
Moreover, we have
\begin{align*}
\frac{\partial^2  f_r(x)}{\partial{u_{s t}}\partial{u_{a b}}}
&= \sum_{j}w_{r,j}\beta_{j}v_{j a}\alpha_{a}(ux)_{a}^{\alpha_{a}-1}x_{b}(\beta_{j}-1)(v(ux)^{\alpha})_{j}^{\beta_{j}-2}v_{j s}\alpha_s(ux)^{\alpha_s-1}_sx_t\\ & \qquad +\delta_{a s}\sum_{j}w_{r,j}\beta_{j}(v(ux)^{\alpha})_{j}^{\beta_{j}-1}v_{ja}\alpha_ax_b(\alpha_{a}-1)(ux)_{a}^{\alpha_{a}-2}x_t,
\end{align*}
thus
\begin{align*}
\sum_{s,t}u_{st}\frac{\partial^2  f_r(x)}{\partial{u_{s t}}\partial{u_{a b}}}
&= \sum_{j}w_{r,j}\beta_{j}v_{j a}\alpha_{a}(ux)_{a}^{\alpha_{a}-1}x_{b}(\beta_{j}-1)(v(ux)^{\alpha})_{j}^{\beta_{j}-2}\sum_sv_{j s}\alpha_s(ux)^{\alpha_s-1}_s\sum_tu_{st}x_t\\ & \qquad +\sum_{j}w_{r,j}\beta_{j}(v(ux)^{\alpha})_{j}^{\beta_{j}-1}v_{ja}\alpha_ax_b(\alpha_{a}-1)(ux)_{a}^{\alpha_{a}-2}\sum_tu_{at}x_t\\
&= \sum_{j}w_{r,j}\beta_{j}v_{j a}\alpha_{a}(ux)_{a}^{\alpha_{a}-1}x_{b}(\beta_{j}-1)(v(ux)^{\alpha})_{j}^{\beta_{j}-2}\sum_sv_{j s}\alpha_s(ux)^{\alpha_s}_s\\ & \qquad +\sum_{j}w_{r,j}\beta_{j}(v(ux)^{\alpha})_{j}^{\beta_{j}-1}v_{ja}\alpha_ax_b(\alpha_{a}-1)(ux)_{a}^{\alpha_{a}-1}\\
&\leq\norm{\alpha}_{\infty}\sum_{j}w_{r,j}\beta_{j}v_{j a}\alpha_{a}(ux)_{a}^{\alpha_{a}-1}x_{b}(\beta_{j}-1)(v(ux)^{\alpha})_{j}^{\beta_{j}-2}\sum_sv_{j s}(ux)^{\alpha_s}_s\\ & \qquad +\norm{\alpha-\ones}_{\infty}\sum_{j}w_{r,j}\beta_{j}(v(ux)^{\alpha})_{j}^{\beta_{j}-1}v_{ja}\alpha_ax_b(ux)_{a}^{\alpha_{a}-1}\\
&\leq(\norm{\alpha}_{\infty}\norm{\beta-\ones}_{\infty}+\norm{\alpha-\ones}_{\infty})\frac{\partial f_r(x)}{\partial u_{ab}}\\ &\qquad \implies \qquad M_{u,u}=\norm{\alpha}_{\infty}\norm{\beta-\ones}_{\infty}+\norm{\alpha-\ones}_{\infty}.
\end{align*}
As $\alpha_i,\beta_j\geq 1$ for every $i\in[n_1],j\in[n_2]$ by assumption, we have $\norm{\alpha-\ones}_{\infty}=\norm{\alpha}_{\infty}-1$ and $\norm{\beta-\ones}_{\infty}=\norm{\beta}_{\infty}-1$ so that $M_{u,u}=\norm{\alpha}_{\infty}\norm{\beta}_{\infty}-1$ and $M_{v,v}=\norm{\beta}_{\infty}-1$.
Now, we look at the mixed second derivatives. We have
\begin{equation*}
\frac{\partial^2  f_r(x)}{\partial{v_{s t}}\partial{w_{a,b}}} =\delta_{r a}\delta_{b s}\beta_s(v(ux)^{\alpha})_{s}^{\beta_s-1}(ux)^{\alpha_{t}}_t.
\end{equation*}
It follows that
\begin{align*}
\sum_{b}w_{a,b}\frac{\partial^2  f_r(x)}{\partial{v_{s t}}\partial{w_{a,b}}} &= \delta_{r a}w_{a,s}\beta_s(v(ux)^{\alpha})_{s}^{\beta_s-1}(ux)^{\alpha_{t}}_t\leq w_{r,s}\beta_s(v(ux)^{\alpha})_{s}^{\beta_s-1}(ux)^{\alpha_{t}}_t= \frac{\partial f_r(x)}{\partial v_{st}}\\
&\qquad \implies \qquad M_{v,w}=1,
\end{align*}
and
\begin{align*}
\sum_{s,t}v_{st}\frac{\partial^2  f_r(x)}{\partial{v_{s t}}\partial{w_{a,b}}} &=\delta_{r a}\beta_b(v(ux)^{\alpha})_{b}^{\beta_b-1}\sum_t v_{bt}(ux)^{\alpha_{t}}_t=\beta_b\frac{\partial f_r(x)}{\partial w_{a,b}}\leq \norm{\beta}_{\infty}\frac{\partial f_r(x)}{\partial w_{a,b}}\\ &\qquad \implies \qquad M_{w,v}=\norm{\beta}_{\infty}.
\end{align*}
Furthermore, it holds
\begin{equation*}
\frac{\partial^2  f_r(x)}{\partial{u_{s t}}\partial{w_{a b}}}=\delta_{r a}\beta_b(v(ux)^{\alpha})_{b}^{\beta_b-1}v_{b s}\alpha_s(ux)^{\alpha_{s}-1}_sx_t,
\end{equation*}
so that
\begin{align*}
\sum_{b}w_{a,b}\frac{\partial^2  f_r(x)}{\partial{w_{a b}}\partial{u_{s t}}}&=\delta_{ra}\sum_bw_{a,b}\beta_b(v(ux)^{\alpha})_{b}^{\beta_b-1}v_{b s}\alpha_s(ux)^{\alpha_{s}-1}_sx_t \\ 
&\leq \sum_bw_{r,b}\beta_b(v(ux)^{\alpha})_{b}^{\beta_b-1}v_{b s}\alpha_s(ux)^{\alpha_{s}-1}_sx_t=\frac{\partial f_r(x)}{\partial u_{st}}\\ &\qquad \implies \qquad M_{u,w}=1,
\end{align*}
and
\begin{align*}
\sum_{s,t}u_{st}\frac{\partial^2  f_r(x)}{\partial{w_{a b}}\partial{u_{s t}}}&=\delta_{r a}\beta_b(v(ux)^{\alpha})_{b}^{\beta_b-1}\sum_sv_{b s}\alpha_s(ux)^{\alpha_{s}-1}_s\sum_tu_{st}x_t\\ &=\delta_{r a}\beta_b(v(ux)^{\alpha})_{b}^{\beta_b-1}\sum_sv_{b s}\alpha_s(ux)^{\alpha_{s}}_s\\ &\leq \norm{\alpha}_{\infty}\delta_{r a}\beta_b(v(ux)^{\alpha})_{b}^{\beta_b}\leq \norm{\alpha}_{\infty}\norm{\beta}_{\infty}\frac{\partial f_r(x)}{\partial w_{a,b}}\\ &\qquad \implies \qquad M_{w,u}=\norm{\alpha}_{\infty}\norm{\beta}_{\infty}. \end{align*}
Finally, we have
\begin{align*}
\frac{\partial^2  f_r(x)}{\partial{u_{s t}}\partial{v_{a b}}}&=w_{r,a}\beta_a(ux)^{\alpha_b}_b(\beta_a-1)(v(ux)^{\alpha})_a^{\beta_a-2}v_{as}\alpha_s(ux)_s^{\alpha_s-1}x_t\\
&\qquad +\delta_{sb}w_{r,a}\beta_a(v(ux)^{\alpha})_a^{\beta_a-1}\alpha_b(ux)^{\alpha_b-1}_bx_t.
\end{align*}
Hence,
\begin{align*}
\sum_{a,b}v_{ab}\frac{\partial^2  f_r(x)}{\partial{u_{s t}}\partial{v_{a b}}}&=\sum_aw_{r,a}\beta_a(\beta_a-1)(v(ux)^{\alpha})_a^{\beta_a-2}v_{as}\alpha_s(ux)_s^{\alpha_s-1}x_t\sum_b v_{ab}(ux)^{\alpha_b}_b\\
&\qquad +\sum_aw_{r,a}\beta_a(v(ux)^{\alpha})_a^{\beta_a-1}v_{as}\alpha_s(ux)^{\alpha_s-1}_sx_t\\
&=\sum_aw_{r,a}\beta_a(\beta_a-1)(v(ux)^{\alpha})_a^{\beta_a-1}v_{as}\alpha_s(ux)_s^{\alpha_s-1}x_t +\frac{\partial f_r(x)}{\partial u_{st}}\\
&\leq (\norm{\beta-\ones}_{\infty}+1)\frac{\partial f_r(x)}{\partial u_{st}}\\ &\qquad \implies M_{u,v}=\norm{\beta-\ones}_{\infty}+1=\norm{\beta}_{\infty},
\end{align*}
and
\begin{align*}
\sum_{s,t}u_{st}\frac{\partial^2  f_r(x)}{\partial{u_{s t}}\partial{v_{a b}}}&=w_{r,a}\beta_a(ux)^{\alpha_b}_b(\beta_a-1)(v(ux)^{\alpha})_a^{\beta_a-2}\sum_{s}v_{as}\alpha_s(ux)_s^{\alpha_s-1}\sum_tu_{st}x_t\\
&\qquad +w_{r,a}\beta_a(v(ux)^{\alpha})_a^{\beta_a-1}\alpha_b(ux)^{\alpha_b-1}_b\sum_tu_{bt}x_t\\
&=w_{r,a}\beta_a(ux)^{\alpha_b}_b(\beta_a-1)(v(ux)^{\alpha})_a^{\beta_a-2}\sum_{s}v_{as}\alpha_s(ux)_s^{\alpha_s}\\
&\qquad +w_{r,a}\beta_a(v(ux)^{\alpha})_a^{\beta_a-1}\alpha_b(ux)^{\alpha_b}_b\\
&\leq \norm{\alpha}_{\infty}w_{r,a}\beta_a(ux)^{\alpha_b}_b(\beta_a-1)(v(ux)^{\alpha})_a^{\beta_a-2}\sum_{s}v_{as}(ux)_s^{\alpha_s}\\
&\qquad +\norm{\alpha}_{\infty}w_{r,a}\beta_a(v(ux)^{\alpha})_a^{\beta_a-1}(ux)^{\alpha_b}_b\\
&= \norm{\alpha}_{\infty}w_{r,a}\beta_a(ux)^{\alpha_b}_b(\beta_a-1)(v(ux)^{\alpha})_a^{\beta_a-1}+\norm{\alpha}_{\infty}\frac{\partial f_r(x)}{\partial v_{ab}}\\
&\leq \norm{\alpha}_{\infty}(\norm{\beta-\ones}_{\infty}+1)\frac{\partial f_r(x)}{\partial v_{ab}}\\
&\qquad \implies M_{v,u}=\norm{\alpha}_{\infty}(\norm{\beta-1}_{\infty}+1)=\norm{\alpha}_{\infty}\norm{\beta}_{\infty}.
\end{align*}
\end{proof}
\fi
As for the case with one hidden layer, for any fixed architecture $\rho_w,\rho_v,\rho_u>0$, $n_1,n_2\in\N$ and $\alpha\in\R^{n_1}_{++},\beta\in\R^{n_2}_{++}$ with $\alpha_i,\beta_j\geq 1$ for all $i\in[n_1],j\in[n_2]$, it is possible to derive lower bounds on $p_w,p_v,p_u$ that guarantee $\rho(A)<1$ in Theorem \ref{mainthm2}. Indeed, it holds
\begin{equation*}
C_w\leq \zeta_1=\rho_w\sum_{j=1}^{n_2}\Big[\rho_v\sum_{l=1}^{n_1}(\rho_u\t\rho_x)^{\alpha_l}\Big]^{\beta_j} \quad \text{and}\quad C_v\leq \zeta_2=\rho_w\sum_{j=1}^{n_2}\beta_j\Big[\rho_v\sum_{l=1}^{n_1}(\rho_u\t\rho_x)^{\alpha_l}\Big]^{\beta_j},
\end{equation*}
with $\t\rho_x=\max_{i\in[n]}\norm{x^i}_{1}$. Hence, the two hidden layers equivalent of \eqref{explicitp} becomes
\begin{equation}\label{explicitpp}
p_w>4(K+2)\zeta_1+5, \ p_v > 2(K+2)\big[2\zeta_2+\norm{\beta}_{\infty}\big]-1, \  p_u>2(K+2)\norm{\alpha}_{\infty}(2\zeta_2+\norm{\beta}_{\infty})-1.
\end{equation}
\section{Experiments}\label{sec:exp}
\begin{minipage}{\textwidth}
    \begin{minipage}{0.55\textwidth}
    \centering
    \includegraphics[width=0.49\linewidth]{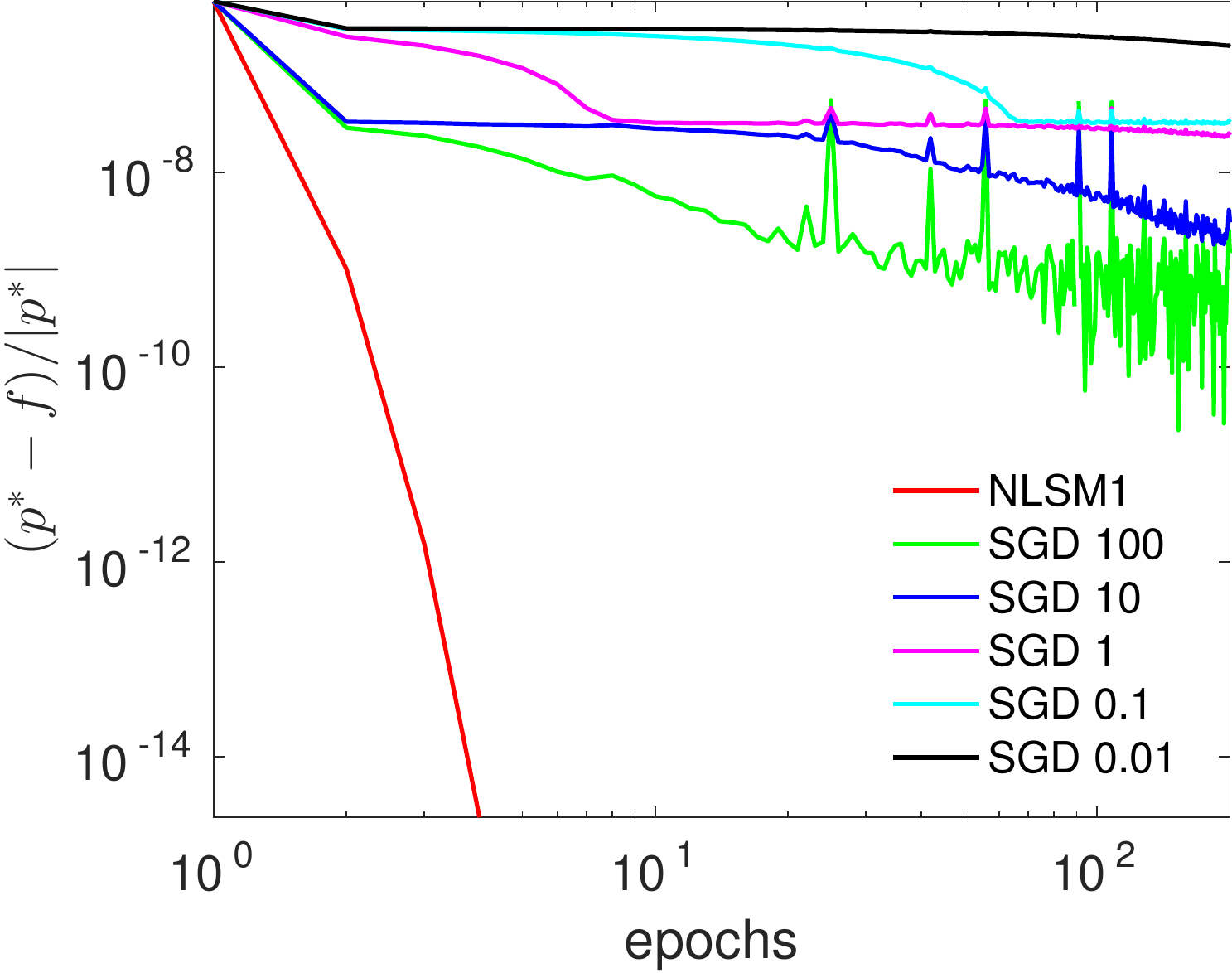}
      \includegraphics[height=3.1cm,width=0.49\linewidth]{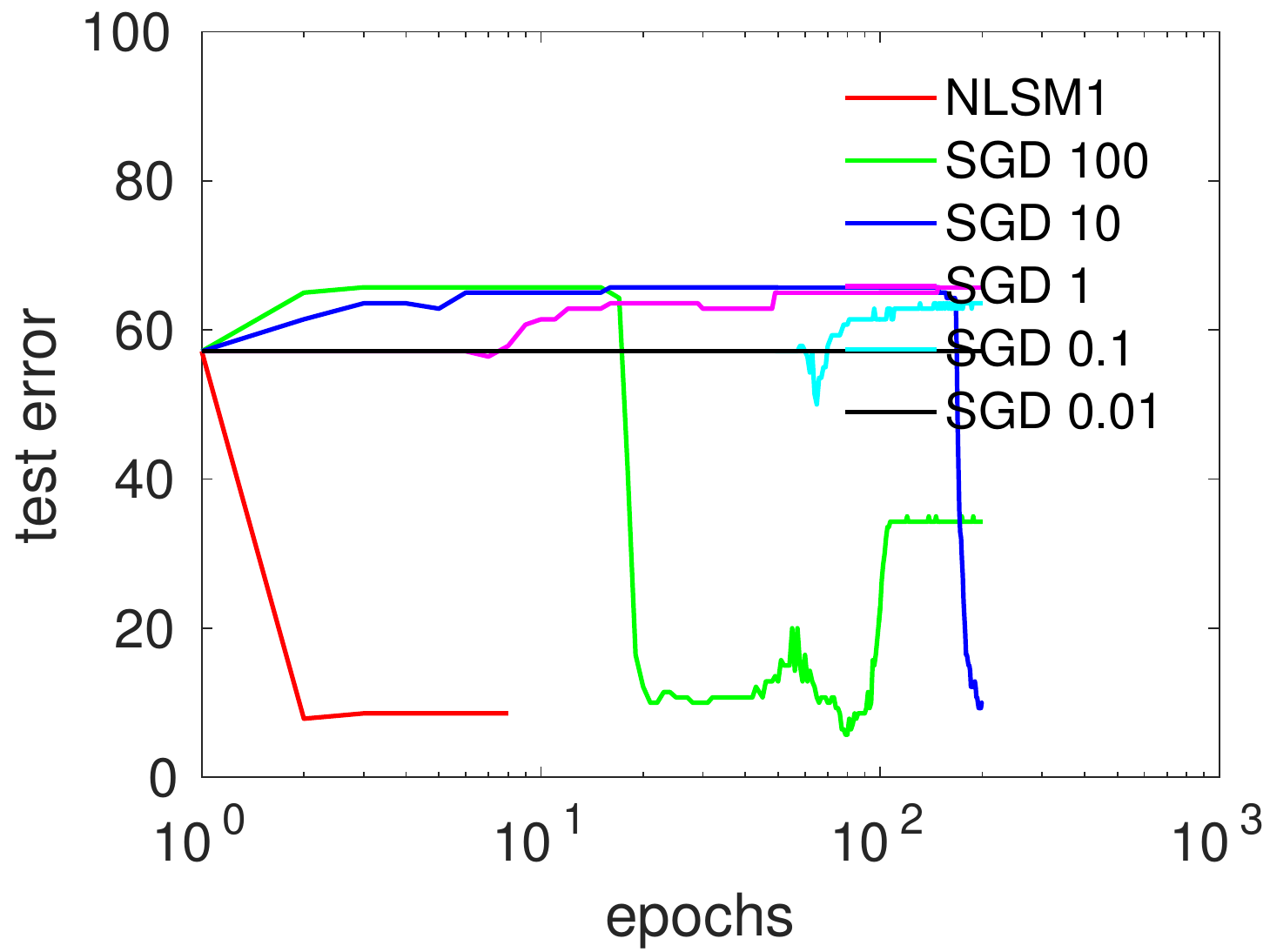}
    \captionof{figure}{Training score (left) w.r.t. the optimal score $p^*$ and 
    test error (right) of NLSM1 and Batch-SGD with different step-sizes.}
    \label{fig:training_score}
  \end{minipage}
  \hfill
  \begin{minipage}[b]{0.49\textwidth}
    \centering
    \vskip-\abovecaptionskip
    \captionof{table}[t]{Test accuracy on UCI datasets} \label{tab1}
    \vskip\abovecaptionskip
    \scriptsize
    \begin{tabular}{p{1cm}p{.6cm}p{.6cm}p{.5cm}p{.5cm}p{.5cm}} 
	\toprule Dataset & NLSM1 & NLSM2 & ReLU1 & ReLU2 & SVM \\ 
	\midrule 
	Cancer & 	96.4 & 96.4 & 95.7 & 93.6 & 95.7 \\  
	Iris & 		90.0 & 96.7 & 100  & 93.3 & 100 \\ 
	Banknote & 	97.1 & 96.4 & 100  & 97.8 & 100 \\ 
	Blood & 	76.0 & 76.7 & 76.0 & 76.0 & 77.3 \\ 
	Haberman & 	75.4 & 75.4 & 70.5 & 72.1 & 72.1 \\ 
	Seeds & 	88.1 & 90.5 & 90.5 & 92.9 & 95.2 \\ 
	Pima & 		79.2 & 80.5 & 76.6 & 79.2 & 79.9 \\ 
	\bottomrule 
    \end{tabular}
  \end{minipage}
\end{minipage}

The shown experiments should be seen as a proof of concept. We do not have yet a good understanding of how one should pick the parameters of our model to achieve good performance.
However, the other papers which have up to now discussed global optimality for neural networks 
\cite{JanSedAna2015, HaeVid2015} 
have not included any results on real datasets. 
Thus, up to our knowledge, we show for the first time a globally optimal algorithm for neural networks that
leads to non-trivial classification results.


We test our methods on several low dimensional UCI datasets 
and denote our algorithms as NLSM1 (one hidden layer) and NLSM2 (two hidden layers).
We choose the parameters of our model out of $100$ randomly generated combinations of $(n_1, \alpha, \rho_w,\rho_u)\in[2,20]\times [1,4]\times (0,1]^2$ (respectively $(n_1,n_2,\alpha,\beta,\rho_w,\rho_v,\rho_u)\in[2,10]^2\times [1,4]^2\times (0,1]^2$)
and pick the best one based on $5$-fold cross-validation error.
We use Equation \eqref{explicitp} (resp. Equation \eqref{explicitpp}) to choose $p_u,p_w$ (resp. $p_u,p_v,p_w$) so that every generated model satisfies the conditions of Theorem \ref{mainthm} (resp. Theorem \ref{mainthm2}), i.e. $\rho(A)<1$. Thus,
global optimality is guaranteed in all our experiments.
For comparison, we use the nonlinear RBF-kernel SVM
and implement two versions of the Rectified-Linear Unit network - one for one hidden layer networks (ReLU1) and one for
two hidden layers networks (ReLU2).
To train ReLU, we use a stochastic gradient descent method which minimizes 
the sum of logistic loss and $L_2$ regularization term over weight matrices to avoid over-fitting.
All parameters of each method are jointly cross validated. More precisely, 
for ReLU the number of hidden units takes values from $2$ to $20$, the step-sizes and regularizers are taken in $\{10^{-6},10^{-5},\ldots,10^2\}$ and $\{0,10^{-4},10^{-3},\ldots,10^4\}$ respectively.
For SVM, the hyperparameter $C$ and the kernel parameter $\gamma$ of the radius basis function 
$K(x^i, x^j)=\exp(-\gamma\norm{x^i-x^j}^2)$ 
are taken from $\{2^{-5},2^{-4}\ldots,2^{20}\}$ and $\{2^{-15},2^{-14}\ldots,2^{3}\}$ respectively.
Note that ReLUs allow negative weights while our models do not.
The results presented in Table \ref{tab1} show that overall our nonlinear spectral methods achieve slightly worse performance 
than kernel SVM while being competitive/slightly better than ReLU networks.
Notably in case of Cancer, Haberman and Pima, NLSM2 outperforms all the other models.
For Iris and Banknote, we note that without any constraints ReLU1 can easily find an architecture 
which achieves zero test error while this is difficult for our models as we impose constraints
on the architecture in order to prove global optimality. 

We compare our algorithms with Batch-SGD in order to optimize \eqref{eq:optipb} with batch-size being $5\%$ of the training data while the step-size 
is fixed and selected between $10^{-2}$ and $10^2.$
At each iteration of our spectral method and each epoch of Batch-SGD, we compute the objective and test error of each method and 
show the results in Figure \ref{fig:training_score}. One can see that our method is much faster than SGDs, and has
a linear convergence rate. 
We noted in our experiments that 
as $\alpha$ is large and our data lies between $[0,1]$,
all units in the network tend to have small values that make the whole objective function relatively small.
Thus, a relatively large change in $(w,u)$ might cause only small changes in the objective function but
performance may vary significantly as the distance is large in the parameter space.
In other words, a small change in the objective may have been caused by a large change in 
the parameter space, and thus, largely influences the performance - which explains the behavior of SGDs in Figure \ref{fig:training_score}.

The magnitude of the entries of the matrix $A$ in Theorems \ref{mainthm} and \ref{mainthm2} grows with the number of hidden units and thus the spectral radius $\rho(A)$ also increases with this number. 
As we expect that the number of required hidden units grows with the dimension of the datasets
we have limited ourselves in the experiments to low-dimensional datasets. However, these
bounds are likely not to be tight, so that there might be room for improvement in terms of
dependency on the number of hidden units.
\begin{figure}
\begin{center}
Nonlinear Spectral Method for 1 hidden layer
\boxed{
\begin{tabular}{l}
  \textbf{Input:} Model $n_1\in\N$, $p_w,p_u\in (1,\infty)$, $\rho_w,\rho_u>0$, $\alpha_1,\ldots,\alpha_{n_1}\geq 1$, $\epsilon >0$ so that the \\ matrix $A$ of Theorem \ref{mainthm} satisfies $\rho(A)<1$. Accuracy $\tau>0$ and $(w^0,u^0)\in S_{++}$.$\hspace{-5mm}$ \\
\hline
\begin{tabular}{ll}
1 & Let $(w^1,u^1)=G^{\Phi}(w^0,u^0)$ and compute $R$ as in Theorem \ref{maincor} $\phantom{\int^A}$ \\
2  & \textbf{Repeat}$\phantom{\sum}$\\
3 & $\qquad (w^{k+1},u^{k+1})=G^{\Phi}(w^k,u^k)$$\phantom{\sum}$\\
4 & $\qquad k \leftarrow k+1$ $\phantom{\displaystyle\sum}$\\
5 & \textbf{Until} $k\geq  \ln\big(\tau/R\big)/\ln\big(\rho(A)\big)\phantom{\sum}$\\
\end{tabular}\\
\hline
\textbf{Output:} $(w^{k},u^{k})$ fulfills $\norm{(w^{k},u^{k})-(w^*,u^*)}_{\infty}<\tau$. $\phantom{\sum^A}$
\end{tabular}}\\
With $G^{\Phi}$ defined as in \eqref{defG}. The method for two hidden layers is similar: consider $G^{\Phi}$ \\ as in \eqref{defG3} instead of \eqref{defG} and assume that the model satisfies Theorem \ref{mainthm2}.
\vspace{-0.5cm}
\end{center}
\end{figure}
\section*{Acknowledgment}
The authors acknowledge support by the ERC starting grant NOLEPRO 307793.
\small
\bibliography{PFMH_bib,regul}
\bibliographystyle{plain} 

\end{document}